\newtheorem{lemma}{Lemma}
\title{Bayesian Optimization with Binary Auxiliary Information}
\author{Yehong Zhang, Zhongxiang Dai, and Bryan Kian Hsiang Low\\
Department of Computer Science, National University of Singapore, Republic of Singapore\\
\{yehong, daizhongxiang, lowkh\}@comp.nus.edu.sg}
\begin{document}
	
\maketitle

\begin{abstract}\vspace{-0.2mm}%\vspace{-1mm}
This paper presents novel mixed-type \emph{Bayesian optimization} (BO) algorithms to accelerate the optimization of a target objective function by exploiting correlated auxiliary information of binary type that can be more cheaply obtained, such as in policy search for reinforcement learning and  hyperparameter tuning of machine learning models with early stopping. To achieve this, we first propose a mixed-type \emph{multi-output Gaussian process} (MOGP) to jointly model the continuous target function and binary auxiliary functions. Then, we propose information-based acquisition functions 
%like \emph{entropy search} (ES) and \emph{predictive ES} (PES) 
such as \emph{mixed-type entropy search} (MT-ES) and \emph{mixed-type predictive ES} (MT-PES) for mixed-type BO based on the MOGP predictive belief of the target and auxiliary functions. The exact acquisition functions of MT-ES and MT-PES cannot be computed in closed form and need to be approximated. We derive an efficient approximation of MT-PES via a novel mixed-type random features approximation of the MOGP model whose cross-correlation structure between the target and auxiliary functions can be exploited for improving the belief of the global target maximizer using observations from evaluating these functions.
We propose new practical constraints to relate the global target maximizer to the binary auxiliary functions.
We empirically evaluate the performance of MT-ES and MT-PES with synthetic and real-world experiments.
\end{abstract} 

\section{INTRODUCTION}\label{intro}%\vspace{-1mm}
\emph{Bayesian optimization} (BO) has recently demonstrated with notable success to be highly effective in optimizing an unknown (possibly noisy, non-convex, and/or with no closed-form expression/derivative) target function using a finite budget of often expensive function evaluations~\citep{shahriari16}. 
 As an example, BO is used by~\citet{Snoek2012} to determine the setting of input hyperparameters (e.g., learning rate, batch size of data) of a \emph{machine learning} (ML) model that maximize its validation accuracy (i.e., output of the unknown target function).
 % to be optimized with respect to the input hyperparameters.
Conventionally, a BO algorithm relies on some choice of acquisition function (e.g., improvement-based \citep{shahriari16} such as probability of improvement or \emph{expected improvement} (EI) over currently found maximum, information-based \citep{villemonteix09} such as \emph{entropy search} (ES)~\citep{hennig2012} and \emph{predictive entropy search} (PES)~\citep{hernandez2014}, or \emph{upper confidence bound} (UCB) \citep{srinivas10}) as a heuristic to guide its search for the global target maximizer. 
To do this, the BO algorithm exploits the chosen acquisition function
to repeatedly select an input for evaluating the unknown target function that trades off between sampling at or near to a likely target maximizer based on a \emph{Gaussian process} (GP) belief of the unknown target function (exploitation) vs. improving the GP belief (exploration) until the budget is expended.

In practice, the expensive-to-evaluate target function often correlates well with some cheaper-to-evaluate \emph{binary} auxiliary function(s) that delineate the input regions potentially containing the global target maximizer and can thus be exploited to boost the BO performance.
%the expensive outputs of the target function often coexists with some auxiliary information of \emph{binary} type which can indicate the area of the target maximizer in a very coarse grain (e.g., promising to be the target maximizer or not) but is much cheaper to collect. Such auxiliary information can be treated as the binary outputs of some cheap-to-evaluate auxiliary functions which correlate well with the expensive-to-evaluate target function and can thus be exploited to boost the BO performance.
%
For example, automatically tuning the hyperparameters of a sophisticated ML model (e.g., deep neural network) with BO is usually time-consuming as it may incur several hours to days to evaluate the validation accuracy of the ML model under each selected hyperparameter setting when training with a massive dataset.
To accelerate this process, consider an auxiliary function whose output 
%denotes a binary decision recommended by some optimal stopping mechanism~\citep{muller2007} to early-stop model training under the selected hyperparameter setting that will end up under-performing (e.g., validation accuracy does not exceed a specified threshold). 
is a binary decision of whether the validation accuracy of the ML model under the selected input hyperparameter setting will exceed a pre-specified threshold,
which is recommended by some early/optimal stopping mechanism~\citep{muller2007} after a small number of training epochs.
Such auxiliary information of binary type is cheaper to obtain and can quickly delineate the input regions containing the best hyperparameter setting, hence incurring less time for exploration.
%to decide whether the selected setting of hyperparameters is able to achieve a certain threshold of the validation accuracy or not after only a few training epochs using some optimal decision-making mechanisms (e.g., Bayesian optimal stopping \citep{muller2007}). These binary auxiliary information can quickly reduce the search space of the best hyperparameters, hence incurring much less time for exploration.
%
%Similarly, to find the best policy of a reinforcement learning algorithm tested over some AI game or real robot with binary outcomes (e.g., success or failure) \citep{tesch13}, 
%
Similarly, to find the best reinforcement learning policy for an AI agent in a game or a real robot in a task with binary outcomes (e.g., success or failure)~\citep{tesch13},
maximizing the success rate (i.e., the unknown target function with a continuous output type) averaged over multiple random environments can be accelerated by deciding whether the selected setting of input policy parameters is promising in a single environment (i.e., the auxiliary function with a binary output type).
To search for the optimal setting of a system via user interaction~\citep{shahriari16}, gathering implicit/binary user feedback (e.g., click or not, like or dislike) is often easier than asking for an explicit rating/ranking of a shown example.
The above practical examples motivate the need to design and develop a \emph{mixed-type} BO algorithm that can naturally trade off between exploitation vs.~exploration over the target function with a \emph{continuous} output type and the cheaper-to-evaluate auxiliary function(s) with a \emph{binary} output type
%selects not just the most informative inputs to evaluate the unknown target function with a \emph{continuous} output type but also the cheaper auxiliary function(s) with a \emph{binary} output type
% information of \emph{binary} type with cheaper costs 
for finding or improving the belief of the global target maximizer, which is the focus of our work here.
%not just the most informative inputs but also the target and/or auxiliary function(s) with mixed-type outputs and varying costs to be evaluated at each selected input for finding or improving the belief of the global target maximizer
%\footnote{In contrast, the BO algorithm of~\citep{hernandez2016} identifies the Pareto set of all (target) functions.},

In this paper, we generalize information-based acquisition functions like ES and PES to \emph{mixed-type ES} (MT-ES) and \emph{mixed-type PES} (MT-PES) for mixed-type BO (Section~\ref{acqfn}).
%This paper presents a novel generalization of PES for the mixed-type BO, which we call \emph{mixed-type PES} (MT-PES) (Section~\ref{MFPES}).
To the best of our knowledge, these are the first BO algorithms that exploit correlated \emph{binary} auxiliary information for accelerating the optimization of a continuous target objective function.
%
%Different from the widely studied multi-fidelity BO algorithms \citep{Huang2006,Swersky2013,kandasamy2016,kandasamy2017,poloczek2017,sen2018} which focus on exploiting the auxiliary functions with \emph{continuous} outputs and assume Gaussian likelihood, the binary outputs in our problem makes the Gaussian likelihood inappropriate and prevents a direct application of existing multi-fidelity BO algorithms.
Different from \emph{continuous} auxiliary functions which have been exploited by a number of multi-fidelity BO algorithms \citep{Huang2006,Swersky2013,kandasamy2016,kandasamy2017,poloczek2017,sen2018}, the binary auxiliary functions in our problem make the widely used Gaussian likelihood inappropriate and prevent a direct application of existing multi-fidelity BO algorithms.\footnote{We discuss other related works in Appendix~\ref{related}.}

To resolve this, we first propose a mixed-type multi-output GP to jointly model the unknown continuous target function and binary auxiliary functions. 
Although the exact acquisition function of MT-PES cannot be computed in closed form, the main contribution of our work here is to show that it is in fact possible to derive an efficient approximation of MT-PES via (a) a novel \emph{mixed-type random features} (MT-RF) approximation of the MOGP model whose cross-correlation structure between the target and auxiliary functions can be exploited for improving the belief of the global target maximizer using the observations from evaluating these functions (Section~\ref{MORF}), and (b) new practical constraints relating the global target maximizer to the binary auxiliary functions (Section~\ref{approxPES}).
We empirically evaluate the performance of MT-ES and MT-PES with synthetic and real-world experiments (Section~\ref{experiment}).\vspace{-1.5mm} 
%
%To achieve this, we model the unknown target and auxiliary functions jointly as a \emph{convolved multi-output GP} (CMOGP) \citep{Alvarez2011} whose convolutional structure is exploited to formally characterize the fidelity of each auxiliary function through its cross-correlation with the target function (Section~\ref{CMOGP}).
%
%Although the exact acquisition function of MF-PES cannot be computed in closed form, the main contribution of our work here is to show that it is in fact possible to derive an efficient approximation of MF-PES via (a) a novel \emph{multi-output random features} (MRF) approximation of the CMOGP model whose cross-correlation (i.e., multi-fidelity) structure between the target and auxiliary functions can be exploited for improving the belief of the target maximizer using the observations from evaluating these functions (Section~\ref{MORF}), and (b) new practical constraints relating the global target maximizer to that of the auxiliary functions (Section~\ref{approxPES}).
%
%We empirically evaluate the BO performance of MF-PES on  synthetic and real-world experiments (Section~\ref{experiment}).
%
%\section{Bayesian Optimization with Bernoulli Auxiliary Information}
\section{PROBLEM SETUP}\label{setup}\vspace{-1.5mm}
In this work, we have access to an unknown target objective function $f_1$ and $M-1$ auxiliary functions $f_2, \ldots, f_M$ defined over a bounded input domain $D \subset \mathbb{R}^d$ such that each input $x \in D$ is associated with a noisy output $y_i(x)$ for $i = 1, \ldots, M$. 
As mentioned in Section~\ref{intro}, a cost $\lambda_i(x)$ is incurred to evaluate function $f_i$ at each input $x \in D$ and the target function is more costly to evaluate than the auxiliary functions, i.e., $\lambda_1(x) > \lambda_i(x)$ for $i = 2, \ldots, M$.
Then, the objective is to find the global target maximizer $
x_* \triangleq \arg\max_{x \in D} f_1(x)
$
% of the unknown target function $f_1$
with a lower cost by exploiting the cheaper auxiliary function evaluations, as compared to evaluating only the target function.
Our problem differs from that of the conventional multi-fidelity BO in that only the target function returns continuous outputs (i.e., $y_1(x) \in \mathbb{R}$) while the auxiliary functions return binary outputs (i.e., $y_i(x) \in \{1, -1\}$ for $i = 2, \ldots, M$).\vspace{-1.6mm} 
%
%Therefore, we have two main challenges: (i) how best to approximate the reward function using the decomposed feedback (non-parametric approximation), and (ii) how to use this estimation to most effectively reduce the average regret (bandit problem).
%for Modeling Continuous and Binary Output Types
\section{MIXED-TYPE MULTI-OUTPUT GP} 
\label{CMOGP}\vspace{-1.5mm}
Various types of multi-output GP models \citep{Cressie1993,Wackernagel98,Webster01,Skolidis2012, Bonilla2007,Teh2005,Alvarez2011} have be used to jointly model target and auxiliary functions with continuous outputs. However, none of them can be used straightforwardly in our problem to model the mixed output types due to the non-Gaussian likelihood $p(y_i(x)|f_i(x))$ of the auxiliary functions.
To resolve this issue, we generalize the \emph{convolved multi-output Gaussian process} (CMOGP) to model the correlated functions with mixed continuous and binary output types by approximating the non-Gaussian likelihood using \emph{expectation propagation} (EP), as discussed later. The CMOGP model is chosen for generalization due to its convolutional structure which can be exploited for deriving an efficient approximation of our acquisition function, as described in Section~\ref{MFPES}.

Let the target and auxiliary functions $f_1, \ldots, f_M$ be jointly modeled as a CMOGP %over a bounded input domain $D \subset \mathbb{R}^d$ such that each input $x \in D$ is associated with a noisy output 
%$y_i(x) \sim \mathcal{N}(f_i(x), \sigma^2_{n_i})$ for $i = 1, \ldots, M$.
which defines each function $f_i$ as a convolution between a smoothing kernel $K_i$ and a latent function\footnote{To ease exposition, we consider a single latent function. Note, however, multiple latent functions can be used to improve the modeling~\citep{Alvarez2011}. More importantly, our proposed MT-RF approximation and MT-PES algorithm can be easily generalized to handle multiple latent functions, as shown in Appendix~\ref{multi_L}.}
$L$ with an additive bias $m_i$:\vspace{-1mm}
\begin{equation}\label{func}
%\begin{array}{c}
f_i(x) \triangleq m_i + \int_{x'\in D}K_i( x-x')\ L(x')\ \text{d}x'\ .
%\end{array}
\end{equation}
Let $D_i^+ \triangleq \{\langle x, i\rangle\}_{ x \in D}$ and $D^+ \triangleq \bigcup_{i=1}^M D_i^+$. As shown by~\citet{Alvarez2011}, if $\{L( x)\}_{x\in D}$ is a GP, then $\{f_i(x)\}_{\langle x,i\rangle \in D^+}$ is also a GP, that is, every finite subset of $\{f_i(x)\}_{\langle x,i\rangle \in D^+}$ follows a multivariate Gaussian distribution. Such a GP is fully specified by its \emph{prior} mean
$\mu_i(x) \triangleq \mathbb{E}[f_i(x)]$
and covariance
$\sigma_{ij}(x, x') \triangleq \text{cov}[f_i(x), f_j(x')]$
for all $\langle x, i\rangle, \langle x', j\rangle\in D^+$, the latter of which characterizes both the correlation structure within each function (i.e., $i=j$) and the cross-correlation between different functions (i.e., $i \neq j$).
%%%
Specifically, let $\{L( x)\}_{x\in D}$ be a GP with zero mean, prior covariance $\sigma_{xx'} \triangleq \mathcal{N}( x - x'| \underline{0}, \Gamma^{-1})$, and
$
K_i( x) \triangleq \sigma_{s_i}\mathcal{N}( x|\underline{0}, P^{-1}_i)
$
where $\sigma^2_{s_i}$ is the signal variance controlling the intensity of the outputs of  $f_i(x)$, $\Gamma$ and $P_i$ are diagonal precision matrices controlling, respectively, the degrees of correlation between outputs of latent function $L(x)$
and cross-correlation between outputs of $L(x)$ and $f_i(x)$.
% and $\underline{0}$ denotes a column vector comprising components of value $0$.
Then, $\mu_i(x) = m_i$ and
\begin{equation}\label{kernel}
\sigma_{ij}(x, x') = \sigma_{s_i}\sigma_{s_j}\mathcal{N}( x -  x'| \underline{0}, \Gamma^{-1}\hspace{-0.5mm}+P^{-1}_i\hspace{-0.5mm}+P^{-1}_j)\ .
\end{equation}
In this work, we assume the Gaussian and probit likelihoods for the target and auxiliary functions, respectively:\vspace{-1mm}
%Let $B \triangleq \{2, \ldots, M\}$ be the set of index of the auxiliary functions. Then,
\begin{equation}\label{l}
\begin{array}{rcl}
 p(y_1(x)|f_1(x)) &\hspace{-2.4mm}\triangleq &\hspace{-2.4mm} \mathcal{N}(f_1(x), \sigma^2_{n_1})\ , \vspace{0.5mm}\\
p(y_i(x)|f_i(x)) &\hspace{-2.4mm}\triangleq &\hspace{-2.4mm}\Phi_\text{cdf}(y_i(x)f_i(x)) %\vspace{-0.5mm}
\end{array}
\end{equation}
for $i= 2, \ldots, M$.
Supposing a column vector $y_{X} \triangleq (y_i(x))^\top_{\langle x, i\rangle \in X}$ of outputs are observed by evaluating each $i$-th function $f_i$
at a set $X_i \subset D_i^+$ of input tuples where $X \triangleq \bigcup_{i=1}^M X_i$, the predictive belief/distribution of $f_Z  \triangleq (f_i(x))^\top_{\langle x,i\rangle \in Z}$ for any set $Z \subseteq D^+ \setminus X$ of input tuples can be computed by %marginalizing out $f_{X}$:
\begin{equation}\label{fz}
%\begin{array}{c}
p(f_Z |  y_X) = \int p(f_Z|f_X)\ p(f_X | y_X)\ \text{d}f_{X}\ .
%\end{array}
\end{equation}
For conventional CMOGP with only continuous output types, \eqref{fz} can be computed analytically since both $p(f_Z|f_X)$ and $p(f_X | y_X)$ are Gaussians \citep{Alvarez2011}. Unfortunately, the non-Gaussian likelihood in \eqref{l} makes the integral in \eqref{fz} intractable. To resolve this issue, the work of~\citet{pourmohamad2016} has proposed a sampling strategy based on a sequential Monte Carlo algorithm which, however, is computationally inefficient and makes the approximation of our proposed acquisition function (Section~\ref{MFPES}) prohibitively expensive.
In contrast, we approximate the non-Gaussian likelihood using EP to derive an analytical approximation of \eqref{fz}, as detailed later. EP will be further exploited in Section~\ref{MFPES} for approximating our proposed acquisition function efficiently.\vspace{-1.2mm}
\subsection{MIXED-TYPE CMOGP PREDICTIVE INFERENCE}\vspace{-1mm}
%
%Let $B \triangleq \{2, ..., M\}$ be a set of index of the auxiliary functions, $f_{X_B} \triangleq (f^\top_{X_i})_{i \in B}^\top$ and $y_{X_B} \triangleq (y^\top_{X_i})_{i \in B}^\top$
Let $X_B \triangleq \bigcup_{i=2}^M X_i$ be a set of input tuples of the auxiliary functions. The posterior distribution $p(f_X|y_X)$ in \eqref{fz} can be computed by
\begin{equation}\label{pfX}
\hspace{-1.7mm}
\begin{array}{l}
p(f_{X_1}, f_{X_B} |  y_{X_1}, y_{X_B}) \vspace{0.5mm}\\
 \propto p(f_{X_1}, f_{X_B})\ p(y_{X_1} | f_{X_1})\ p(y_{X_B} | f_{X_B}) \vspace{0.5mm}\\
\displaystyle = p(f_{X_1}|f_{X_B}) p(f_{X_B}) p(y_{X_1} | f_{X_1}) \hspace{-1.5mm} \prod_{\langle x, i\rangle \in X_B} \hspace{-1.5mm} p(y_i(x) | f_i(x)) \\
= p(f_{X_1}|f_{X_B})\ p(y_{X_1} | f_{X_1})\ q(f_{X_B})
\end{array}
\end{equation}
where $q(f_{X_B}) \triangleq p(f_{X_B}) \prod_{\langle x, i\rangle \in X_B}  p(y_i(x) | f_i(x))$ can be approximated with a multivariate Gaussian $\mathcal{N}(f_{X_B}|\tilde{\mu}_B, \tilde{\Sigma}_B)$ using EP by approximating each non-Gaussian likelihood as a Gaussian. Let
\begin{equation}\label{EP}
\begin{array}{rcl}
p(y_i(x) | f_i(x)) &\hspace{-2.4mm}= &\hspace{-2.4mm}\Phi_\text{cdf}(y_i(x)f_i(x)) \vspace{1mm}\\
&\hspace{-2.4mm} \approx&\hspace{-2.4mm} \tilde{Z}_i(x)\ \mathcal{N}(f_i(x)|\tilde{\mu}_i(x), \tilde{\sigma}^2_i(x))
\end{array}
\end{equation}
for all $\langle x, i\rangle \in X_B$.
%$p(y_i(x) | f_i(x)) = \Phi_\text{cdf}(y_i(x)f_i(x)) \approx \tilde{Z}_i(x) \mathcal{N}(f_i(x)|\tilde{\mu}_i(x), \tilde{\sigma}^2_i(x))$
Following the EP procedure in Section $3.6$ of \citet{Rasmussen2006}, the parameters $\tilde{\mu}_i(x)$ and $\tilde{\sigma}^2_i(x)$ can be computed analytically and 
\begin{equation}\label{qfB}
\begin{array}{rcl}
\tilde{\mu}_B &\hspace{-2.4mm}= &\hspace{-2.4mm} \Sigma_{X_BX_B}(\tilde{\Sigma}^{-1}\tilde{\mu} + \Sigma_{X_BX_B}^{-1}\mu_{X_B})\vspace{1mm} \\
\tilde{\Sigma}_B &\hspace{-2.4mm}= &\hspace{-2.4mm} (\tilde{\Sigma}^{-1} + \Sigma_{X_BX_B}^{-1})^{-1}
\end{array}
\end{equation}
where $\tilde{\mu} \triangleq (\tilde{\mu}_i(x))^\top_{\langle x, i\rangle \in X_B}$, $\tilde{\Sigma}$ is a diagonal matrix with diagonal components $\tilde{\sigma}^2_i(x)$ for $\langle x, i\rangle \in X_B$, $\Sigma_{AA'} \triangleq (\sigma_{ij}(x, x'))_{\langle x, i\rangle \in A,\langle x', j\rangle \in A'}$, and $\mu_A \triangleq (\mu_i(x))^\top_{\langle x, i\rangle \in A}$ for any $A,A'\subseteq D^+$. 

By combining \eqref{qfB}, \eqref{pfX}, and \eqref{l} with \eqref{fz} (Appendix~\ref{a_pred}), the predictive belief $p(f_Z|y_X)$ can be approximated by a multivariate Gaussian $\mathcal{N}(\mu_{Z|X},\Sigma_{ZZ|X})$ with the following \emph{posterior} mean vector and covariance matrix:
\begin{equation} \label{var}
\begin{array}{rcl}
\mu_{Z|X} & \hspace{-2.4mm}\triangleq & \hspace{-2.4mm}\mu_Z + \Sigma_{ZX} \Lambda^{-1}(\tilde{y}_X - \mu_X) \vspace{1mm}\\
\Sigma_{ZZ|X} & \hspace{-2.4mm}\triangleq& \hspace{-2.4mm} \Sigma_{ZZ} - \Sigma_{ZX} \Lambda^{-1}\Sigma_{XZ}
\end{array}
\end{equation}
where $\Lambda \triangleq \begin{bmatrix}
\Sigma_{X_1X_1}+\Sigma_n & \Sigma_{X_1X_B} \notag\\ 
\Sigma_{X_BX_1} &  \Sigma_{X_BX_B} + \tilde{\Sigma}
\end{bmatrix}$, $\tilde{y}_X \triangleq [y_{X_1}; \tilde{\mu}]$, and $\Sigma_n$ is a $|X_1| \times |X_1|$ diagonal matrix with diagonal components $\sigma^2_{n_1}$.
Consequently, the approximated predictive belief of $y_i(x)$ for any input tuple $\langle x, i\rangle \in D^+$ can be computed using
$p(y_i(x)|y_X) = \int p(y_i(x)|f_i(x))\ p(f_i(x)|y_X)\ \text{d}f_i(x)$.
Due to \eqref{l} and \eqref{var},
\begin{equation}\label{py}
\hspace{-1.7mm}
\begin{array}{rcl}
p(y_1(x)|y_X)  & \hspace{-2.4mm}\approx& \hspace{-2.4mm} \mathcal{N}(y_1(x)|\mu_{\{\langle x, 1\rangle\}|X}, \sigma^2_{\langle x, 1\rangle|X}\hspace{-1mm} + \hspace{-0.5mm}\sigma^2_{n_1}) \vspace{0.5mm}\\
p(y_i(x) = 1|y_X)  & \hspace{-2.4mm}\approx& \hspace{-2.4mm}  \Phi_\text{cdf} \left(\mu_{\{\langle x, i\rangle\}|X}/\sqrt{1+\sigma^2_{\langle x, i\rangle|X}} \ \right)
\end{array}
\end{equation}
for $i = 2, \ldots, M$
where $\sigma^2_{\langle x, i\rangle|X} \triangleq \Sigma_{\{\langle x, i\rangle\}\{\langle x, i\rangle\}|X}$ for $i = 1, \ldots, M$.%\vspace{-4mm}
\section{BO WITH BINARY AUXILIARY INFORMATION}\label{acqfn}%\vspace{-2mm}
To achieve the objective described in Section~\ref{setup}, our BO algorithm repeatedly selects the next input tuple $\langle x, i \rangle$ for evaluating the $i$-th function $f_i$ at $x$
that maximizes a choice of acquisition function $\alpha (y_{X},\langle x, i \rangle)$ \emph{per unit cost} given the past observations $(X, y_{X})$:
$$
\begin{array}{c}
\langle x, i \rangle^+ \triangleq\mathop{\arg\max}_{\langle x, i \rangle \in D^{+} \setminus X} \alpha (y_{X},\langle x, i \rangle) / \lambda_i(x)
\end{array}
$$
and updates $X\leftarrow X\cup\{\langle x, i \rangle^+ \}$ until the budget is expended.
Since the costs of evaluating the target vs. auxiliary functions differ, we use the above \emph{cost-sensitive} acquisition function such that the cheaper auxiliary function evaluations can be exploited. We will focus on designing the acquisition function $\alpha$ first and the estimation of $\lambda_i(x)$ in real-world applications will be discussed later in Section~\ref{experiment}.
%Since such a cost (e.g., time incurred to train a ML model) is usually not known, we need a method to estimate it in real-world applications, which will be discussed later in Section~\ref{experiment}.

Intuitively, 
%the acquisition function 
$\alpha$ should be designed to enable its BO algorithm to jointly and naturally optimize the non-trivial trade-off between 
%exploring the unobserved areas of both target and auxiliary functions and exploiting the maximizer of \textit{only} the target function in the promising area. 
exploitation vs. exploration over the target and auxiliary functions for finding or improving the belief of the global target maximizer $x_*$ by utilizing information from the mixed-type CMOGP predictive belief of these functions~\eqref{var}. 
To do this, one may be tempted to directly use the conventional EI \citep{mockus1978} and $\text{EI}_\pi$ \citep{tesch13} acquisition functions 
%as $\alpha_1$ and $\alpha_i$ for $i = 2, \ldots, M$,
for selecting inputs to evaluate the target and auxiliary functions, respectively.
$\text{EI}_\pi$ is a variation of EI and, to the best of our knowledge, the only acquisition function designed for optimizing an unknown function with a binary output type.
%which is originally designed for continuous output \citep{mockus1978} and has been modified for optimizing an unknown function with binary output \citep{tesch13}. This can be achieved by 
%
However, this does not satisfy our objective since $\text{EI}_\pi$ aims to find the global maximizer of the auxiliary function which can differ from the global target maximizer if the target and auxiliary functions are not perfectly correlated.
%It indirectly optimize the target function by optimizing the auxiliary function with binary outputs, which only works if the target and auxiliary functions are fully correlated.
%
%To design an acquisition function which can select the next input tuple directly for finding \emph{only} the global target maximizer
To resolve this issue, we propose to exploit information-based acquisition functions and generalize them to our mixed-type BO problem such that input tuples for evaluating the target and auxiliary functions are selected to directly maximize \emph{only} the unknown target objective function, as detailed later.\vspace{-1mm}
\subsection{INFORMATION-BASED ACQUISITION FUNCTIONS FOR MIXED-TYPE BO}\label{Info}\vspace{-0.6mm}
Information-based acquisition functions like ES
% \citep{hennig2012} 
and PES 
%\citep{hernandez2014} 
have been designed to enable their BO algorithms to improve the belief of the global target maximizer.
In mixed-type BO, we can similarly define a belief of the maximizer $x_{*_i}$ of each $i$-th function $f_i$ as
$
p(x_{*_i}|y_X) \triangleq p(f_i(x_{*_i}) = \mathop{\max}_{x \in D} f_i(x) | y_X)
$
for $i=1, ..., M$.
To achieve the objective of maximizing \textit{only} the target function in mixed-type BO, ES can be used to measure the information gain of \textit{only} the global target maximizer $x_{*}$ (i.e., $x_{*_1}$) from selecting the next input tuple $\langle x, i \rangle$ for evaluating the $i$-th (possibly binary auxiliary) function $f_i$ 
%(i.e., possibly auxiliary and achieving binary output) 
at $x$ given the past observations $(X,y_X)$:
\begin{equation} \label{ES}
%\begin{array}{l}
\alpha(y_X,\hspace{-0.5mm} \langle x, i \rangle ) \hspace{-0.5mm}\triangleq \hspace{-0.5mm} H(x_{*}|y_X)- \mathbb{E}_{p(y_i(x) | y_X)}[H(x_{*}|y_{X \cup \{\langle x, i \rangle\}})].
%\end{array}
\end{equation}
Similar to the \emph{multi-task ES} algorithm~\citep{Swersky2013} which is designed for BO with \emph{continuous} auxiliary information, we can use Monte Carlo sampling to approximate~\eqref{ES} by utilizing information from the mixed-type CMOGP predictive belief (i.e.,~\eqref{var} and~\eqref{py}) of the target and auxiliary functions.
%Following the approximation of MT-ES in~\citet{Swersky2013},
To make the Monte Carlo approximation tractable and efficient, we need to discretize the input domain and assume that the search space for evaluating \eqref{ES} is pruned to a small set of input candidates which, following the work of~\citet{Swersky2013}, can be selected by applying EI to \emph{only} the target function.
Such a form of approximation, however, faces two critical limitations: (a) Computing~\eqref{ES} incurs cubic time in the size of the discretized input domain and is thus expensive to evaluate with a large input domain (or risks being approximated poorly), and
(b) the pruning of the search space artificially constrains the exploration of auxiliary functions and requires a parameter in EI (i.e., to control the exploration-exploitation trade-off) to be manually tuned to fit different real-world applications.

%but faced two critical limitations: (a) Computing~\eqref{ES} incurs cubic time in the size of the discretized input domain and is thus 
%expensive to evaluate with a large input domain (or risks being approximated poorly), and
%(b) to reduce the considerable time in evaluating~\eqref{ES} over the entire discretized input domain and perform competitively, MT-ES heuristically prunes this search space to a small set of input candidates that are selected by applying EI to \emph{only} the target function, hence artificially constraining the exploration of auxiliary functions and requiring a parameter in EI (i.e., to control the exploration-exploitation trade-off) to be manually tuned to fit different real-world applications.

To circumvent the above-mentioned issues, we can exploit the symmetric property of conditional mutual information and rewrite~\eqref{ES} as
\begin{equation}\label{PES}
%\begin{array}{rl}
%\hspace{-5mm} &
\alpha(y_X,\hspace{-0.5mm} \langle x, i \rangle ) \hspace{-0.5mm}
%\\\hspace{-5mm} & 
=\hspace{-0.5mm} H(y_i(x) |y_X) - \mathbb{E}_{p(x_{*}| y_X)}[H(y_i(x) | y_X, x_{*})]
%\end{array}
\end{equation}
which we call \emph{mixed-type PES} (MT-PES).
Intuitively, the selection of an input tuple $\langle x, i \rangle$ to maximize~\eqref{PES} has to trade off between exploration of every target and auxiliary function (hence inducing a large Gaussian predictive entropy $H(y_i(x)|y_X)$) vs. exploitation of the current belief $p(x_{*}| y_X)$ of the global target maximizer $x_{*}$ to choose a nearby input $x$ of  function $f_i$ (i.e., convolutional structures and maximizers of the target and auxiliary functions are similar or close (Section~\ref{CMOGP})) to be evaluated (hence inducing a small expected predictive entropy $\mathbb{E}_{p(x_{*}| y_X)}[H(y_i(x)| y_X, x_{*})]$) to yield a highly informative observation that in turn improves the belief of $x_{*}$.
Note that the entropy of continuous random variables (i.e., differential entropy) and discrete/binary random variables (i.e., Shannon entropy) are not comparable\footnote{For example, the Shannon entropy is always non-negative while the differential entropy can be negative. A detailed discussion of their difference and connection is available in Chapter 8 of \cite{cover2006}.}.
%https://en.wikipedia.org/wiki/Entropy$\_$(information$\_$theory)$\#$	Extending$\_$discrete$\_$entropy$\_$to$\_$the$\_$continuous$\_$case
So, the differential entropy terms in \eqref{PES} for $i=1$ are not comparable to the Shannon entropy terms in \eqref{PES} for $i = 2, \ldots, M$. 
%even though the two entropy terms in \eqref{PES} are separately not comparable for $i = 1, \ldots, M$, 
Fortunately, the difference of the two entropy terms in \eqref{PES} 
%(i.e., mutual information) 
is exactly the information gain of the global target maximizer $x_*$ in~\eqref{ES} which is comparable between $i = 1$ vs.~$i=2, \ldots, M$ regardless of whether the output $y_i(x)$ is continuous or binary. 
%for the continuous/target and binary/auxiliary outputs are both equivalent to the information gain of the target maximizer in~\eqref{ES} and thus, comparable for $i = 1, \ldots, M$.
% explain equation 8  in terms of explore (max first term) and exploit (min second term = pick an unobserved point and function i that can be explained well by the expected target maximizer), which means that it would be highly cross-correlated. Unlike other methods, same point will not be picked again cos first term is near-zero unless extremely noisy function. This makes sense since picking the same point would not improve the belief of the target maximizer.
%Using $y_{\langle x, i\rangle} \sim \mathcal{N}(f_i(x), \sigma^2_{n_i})\ $ and~\eqref{var},
Next, we will describe how to evaluate \eqref{PES} efficiently.
\section{APPROXIMATION OF MIXED-TYPE PREDICTIVE ENTROPY SEARCH} \label{MFPES}
Due to~\eqref{py},
the first Gaussian predictive/posterior entropy term in~\eqref{PES} can be computed analytically:
\begin{equation}\label{puke}
\hspace{-1.7mm}
\begin{array}{rcl}
H(y_1(x)|y_X) & \hspace{-2.4mm} \triangleq & \hspace{-2.4mm} 0.5 \log( 2\pi e (\sigma^2_{\langle x, 1 \rangle | X} + \sigma^2_{n_1})) \vspace{1mm}\\
H(y_i(x)|y_X) &\hspace{-2.4mm} \triangleq& \hspace{-2.4mm}\displaystyle -
\hspace{-1mm}
\sum_{y_i(x) \in \{1, -1\}} 
\hspace{-0.5mm} 
p(y_i(x)|y_X)\log p(y_i(x)|y_X) 
%\vspace{1mm}\\
% \vspace{-2mm}
\end{array}
\end{equation}
for $i = 2, \ldots, M$.
Unfortunately, the second term in~\eqref{PES} cannot be evaluated in closed form.
Although this second term appears to resemble that in PES~\citep{hernandez2014}, their approximation method, however, cannot be applied straightforwardly here since it cannot account for either the binary auxiliary information or the complex cross-correlation structure between the target and auxiliary functions.
To achieve this, we will first propose a novel mixed-type random features approximation of the CMOGP model whose cross-correlation structure between the target and auxiliary functions can be exploited for sampling the global target maximizer $x_{*}$ more accurately using the past observations $(X,y_X)$ from evaluating these functions (especially when the target function is sparsely evaluated due to its higher cost), which is in turn used to approximate the expectation in~\eqref{PES}.
%, which can use the observations from both target and auxiliary functions to improve the estimation of $x_{*_t}$ especially if the target function is sparsely observed.
Then, we will formalize some practical constraints relating the global target maximizer to the binary auxiliary functions, which are used to approximate the second entropy term within the expectation in~\eqref{PES}.%\vspace{-0mm}
% the relationship between the maxima of target and auxiliary functions will be for generalizing PES to MF-PES, which is 
%
%due to the additional auxiliary functions and their observations in CMOGP. To make fully use of the cross-correlations between functions in the approximation of \eqref{PES}, a novel multi-output random features (MRF) approximation of CMOGP will first be proposed to approximate the expectation in \eqref{PES}. Then, the approximation algorithm of the entropy term $H(y_{\left\langle x, i \right\rangle } | y_X, x_{*_t})$ of single-task PES \citep{hernandez2014} will be generalized to multi-fidelity PES with a novel method to capture the relationship between the maxima of target and auxiliary functions.
%
\subsection{MIXED-TYPE RANDOM FEATURES}
	%(MT-RF) FOR SAMPLING THE TARGET MAXIMIZER}
\label{MORF}
To approximate the expectation in \eqref{PES} efficiently by averaging over samples of the target maximizer from $p(x_{*}|y_X)$ in a continuous input domain, we will derive an analytic sample of the unknown function
%\footnote{Note that~\eqref{var} gives an analytic expression of the CMOGP predictive mean of $f_t(x)$ but not of $f_t(x)$ itself. 
%So, maximizing its predictive mean over $x$ is not equivalent to maximizing $f_t(x)$.}
$f_i$ given the past observations $(X,y_X)$, which is differentiable and can be optimized by any existing gradient-based optimization method to search for its maximizer.
Unlike the work of~\citet{hernandez2014} that achieves this in PES using 
the \emph{single-output random features} (SRF)
method for handling a single continuous output type~\citep{Miguel10,rahimi2007}, we have to additionally consider how the binary auxiliary functions and their complex cross-correlation structure with the target function can be exploited for sampling the target maximizer $x_{*}$ more accurately.
%and the complex cross-correlation structure between the target and auxiliary functions can be exploited for sampling the target maximizer $x_{*_t}$ more accurately.
To address this, we will now present a novel \emph{mixed-type random features} (MT-RF) approximation of the CMOGP model by first deriving an analytic form of the latent function $L$ with SRF and then an analytic approximation of $f_i$
using the convolutional structure of the CMOGP model. The results of EP \eqref{EP} can be reused here to approximate the non-Gaussian likelihood $p(y_i(x)|f_i(x))$ for $i = 2, \ldots, M$.

%which makes the cross-correlations between target and auxiliary functions can be exploited for sampling the target maximizer more accurately. 

%extent the idea of random features \citep{rahimi2007} which has been used for single-output GP \citep{hernandez2014} to the CMOGP model, as shown below.

%Recall from Section~\ref{CMOGP} that CMOGP defines  as the convolution between a smoothing kernel $K_i$ and a latent function $L$ which is assumed to be a GP with a prior covariance $\sigma_{xx'}$. 

Using SRF~\citep{rahimi2007}, the latent function $L$ modeled using GP can be approximated by a linear model $L(x) \approx \phi (x)^\top \theta$ where $\phi(x)$ is a random vector of an $m$-dimensional feature mapping of the input $x$ for $L(x)$ and $\theta \sim \mathcal{N}(\underline{0}, I)$ is an $m$-dimensional vector of weights. Then, interestingly, by exploiting the convolutional structure of the CMOGP model in \eqref{func}, $f_i(x)$ can also be approximated analytically by a linear model: $f_i(x) \approx m_i + \phi_i(x)^\top\theta$ where the random vector $\phi_i(x) \triangleq \sigma_{s_i}\ \text{diag}(\exp({-0.5W^\top P_i^{-1} W}))\ \phi(x)$ can be interpreted as input features of $f_i(x)$,
%\footnote{The approximated covariance $\sigma_{\langle x,i\rangle\langle x',j\rangle} \approx \phi_i(x)^\top\phi_j(x')$ then characterizes the correlation within each function (i.e., $i = j$) and the cross-correlation between different functions (i.e., $i \neq j$).}
$W$ is a $d \times m$ random matrix which is used to map $x \rightarrow \phi(x)$ in SRF,  and function $\text{diag}(A)$ returns a diagonal matrix with the same diagonal components as $A$. The exact definition of $\phi(x)$ and the derivation of $\phi_i(x)$ are in Appendix~\ref{a_conv}.

Then, a sample of $f_i$ can be constructed using $f^{(s)}_i(x) \triangleq m_i + \phi_i^{(s)}(x)^\top \theta^{(s)}$ where $\phi_i^{(s)}(x)$ and $\theta^{(s)}$ are vectors of features and weights sampled, respectively, from the random vector $\phi_i(x)$ and the posterior distribution of weights $\theta$ given the past observations $(X, y_X)$,
the latter of which is approximated to be Gaussian by exploiting the conditional  independence property of MT-RF and the results of EP~\eqref{EP} from the mixed-type CMOGP model:
$$
p(\theta|y_X) = \mathcal{N}(\theta|A^{-1}\Phi(\Lambda - \Sigma_{XX})^{-1}(\tilde{y}_X - \mu_X), A^{-1})
$$
where $A \triangleq \Phi (\Lambda - \Sigma_{XX})^{-1} \Phi^\top \hspace{-1mm} + I$ and $\Phi \triangleq (\phi_j(x))_{\left\langle x, j \right\rangle \in X}$, as detailed in Appendix \ref{a_conv2}.

Consequently, the expectation in~\eqref{PES} can be approximated by averaging over $S$ samples of the target maximizer $x^{(s)}_{*}$ of $f^{(s)}_1$ to yield an approximation of MT-PES:%\vspace{-0.5mm}
\begin{equation}\label{approx0}
%\begin{array}{c}
\alpha(y_X,\langle x, i \rangle ) \approx H(y_i(x)|y_X) - \frac{1}{S}\sum_{s=1}^S H(y_i(x) | y_X, x^{(s)}_{*})
%\end{array}
\end{equation}
where
%\begin{equation}\label{xstar}
$x^{(s)}_{*} \triangleq x^{(s)}_{*_1}$ and $x^{(s)}_{*_i} \triangleq \mathop{\arg\max}_{x \in D} f_i^{(s)}(x)\ $ for $i=1,\ldots,M$.
%\end{equation}
%
Drawing a sample of $x^{(s)}_{*}$ incurs $\mathcal{O}(m^3+ m^2|X|)$ time if $m \leq |X|$ and $\mathcal{O}(|X|^3+ |X|^2m)$ time if $m > |X|$, which 
% where $m$ is the dimension of the features and $n$ is the total number of observations.
is more efficient than using Thompson sampling to sample $f_i$ over a discretized input domain that incurs cubic time in its size since a sufficiently fine discretization of the entire input domain is typically larger in size than the no. $|X|$ of observations.\vspace{-1.3mm} 
\subsection{APPROXIMATING THE PREDICTIVE ENTROPY CONDITIONED ON THE TARGET MAXIMIZER}
\label{approxPES}\vspace{-1mm} 
We will now discuss how the second entropy term in~\eqref{approx0} is approximated. Firstly, the posterior distribution of $y_i(x)$ given the past observations %$(X,y_X)$
and target maximizer
%$x_{*_t}$
is computed by
\begin{equation}\label{p}
\hspace{-1.7mm}
\begin{array}{c}
 p(y_i(x)| y_X, x_{*}) %\vspace{1mm}\\
\displaystyle \hspace{-0.5mm} = \hspace{-1mm}\int p(y_i(x)|f_i(x))\ p(f_i(x)|y_X, x_{*})\ \text{d}f_i(x)%\vspace{-1.5mm}
\end{array}
\end{equation}
where $p(y_i(x)|f_i(x))$ is defined in \eqref{l} and $p(f_i(x)|y_X, x_{*})$ will be approximated by EP, as detailed later.
As shown in Section~\ref{CMOGP}, the Gaussian predictive belief $p(f_i(x)|y_X)$~\eqref{var} can be computed analytically. Then, $p(f_i(x)|y_X, x_{*})$ can be considered as a constrained version of $p(f_i(x)|y_X)$ by further conditioning on the target maximizer $x_{*}$. 
%Let the maximizer of function $f_i(x)$ be defined as $x_{*_i}$, 
It is intuitive that the posterior distribution of $f_i(x)$ is constrained by\vspace{-0mm}
%\begin{equation} \label{cons1}
$f_i(x) \leq f_i(x_{*_i}), \forall \langle x, i \rangle \in D^+$.
%\end{equation}
However, since only the target maximizer $x_{*}$ is of interest, how should the value of $f_i(x)$ be constrained by $x_{*}$ instead of $x_{*_i}$ if $i = 2, \ldots, M$? %Intuitively, the relationship between $f_i(x_{*_i})$ and $f_i(x_{*_t})$ is required to constrain $f_i$ with $x_{*_t}$.
To resolve this, we introduce a slack variable $c_i$ to formalize the relationship between maximizers of the target and auxiliary functions:\vspace{-0.5mm}
\begin{equation} \label{cons2}
f_i(x) \leq f_i(x_{*})+c_i\quad  \forall x \in D, i \neq 1\vspace{-0mm}
\end{equation}
where $c_i \triangleq \mathbb{E}_{p(x_{*_i}|y_X)}[f_i(x_{*_i})] - \mathbb{E}_{p(x_{*}|y_X)}[f_i(x_{*})]$ measures the gap between the expected maximum of $f_i$ and the expected output of $f_i$ evaluated at $x_{*}$ and can be approximated efficiently using our MT-RF method even though $f_i$ is unknown, as detailed later.
%A detailed discussion of $c_i$ will be given 
%in Sections~\ref{EP1} and~\ref{EP2}.
Consequently, the following simplified constraints instead of \eqref{cons2} will be used to approximate $p(f_i(x)|y_X, x_{*})$:\vspace{-1mm}
\begin{enumerate}
	\item [$C1$.]  $f_i(x) \hspace{-0.5mm}\leq \hspace{-0.5mm} f_i(x_{*}) \hspace{-0.3mm} + \hspace{-0.3mm} \delta_i c_i$ for a given $\langle x, i \rangle \hspace{-0.5mm} \in \hspace{-0.4mm} D^+\hspace{-0.7mm}$ where $\delta_i$ equals to $0$ if $i = 1$, and $1$ otherwise.\vspace{-0mm}
	\item [$C2$.] $f_1(x_{*}) \geq y_{\text{max}} + \epsilon_1$ where 
	$\epsilon_1 \sim \mathcal{N}(0, \sigma^2_{n_1})$ and 
	$y_{\text{max}} \triangleq \max_{\langle x,1\rangle\in X_1} y_1(x)$ is the largest among the noisy outputs observed by evaluating the target function $f_1$ at $X_1$. \vspace{-0mm}
	\item [$C3$.] $\Phi_\text{cdf}(f_j(x_{*}) + c_j) \geq 0.5$ for $j = 2, \ldots, M$.\footnote{Like the work of~\citet{Swersky2013} (Section~$2.2$), we assume the cross-correlation between the target and auxiliary functions to be positive. An auxiliary function that is negatively correlated with the target function can be easily transformed to be positively correlated by negating all its outputs.}\vspace{-1mm}
\end{enumerate}
%meaning of these constraints
The first constraint $C1$ keeps the influence of $x_{*}$ to the next input tuple $\langle x,i\rangle$ to be selected by MT-PES. Instead of constraining all unknown functions over the entire input domain, $C2$ and $C3$ relax~\eqref{cons2} to be valid only for the outputs observed from evaluating these functions. When the target and auxiliary functions are highly correlated (i.e., small $c_j$), $C3$ means that a positive label can be observed with high probability by evaluating an auxiliary function at the target maximizer $x_*$.
Using these constraints, %we have
$
p(f_i(x)|y_X, x_{*}) \approx p(f_i(x)|y_X, C1, C2, C3)
$
%where $p(f_i(x)|y_X, C1, C2, C3)$ 
which can be approximated analytically using EP. To achieve this, we will first derive a tractable approximation of the posterior distribution $p(f_i(x_{*})|y_X, C2, C3)$ which does not depend on the next selected input $x$. Note that such terms can be computed once and reused in the approximation of $p(f_i(x)|y_X, x_{*})$ in \eqref{p} which depends on $x$, as detailed later.\vspace{0mm}
%
%\subsubsection{Approximating terms independent of $x$} \label{EP1}\vspace{-0.5mm}
%

\textbf{Approximating terms independent of $x$.}
Let $f^*_j \triangleq f_j(x_{*})$ and $f^* \triangleq (f_j^*)^{\top}_{j=1,\ldots,M}$. We can use the cdf of a standard Gaussian distribution and an indicator function to represent the probability of $C2$ and $C3$, respectively. Then, the posterior distribution $p(f^*|y_X)$ can be constrained with $C2$ and $C3$ by
\begin{equation} \label{approx1}
\hspace{-1.7mm}
\begin{array}{l}
p(f^*|y_X, C2, C3) \vspace{0.5mm}\\
\displaystyle \propto p(f^*|y_X)\ \Phi_\text{cdf}\hspace{-0.7mm}\left( \hspace{-0.5mm}\frac{f^*_1 - y_{\text{max}}}{\sigma_{n_1}} \hspace{-0.5mm}\right) \prod_{j=2}^M \mathbb{I}(f^*_j + c_j \geq 0)\ . \hspace{-4.4mm}
\end{array}
\end{equation}
Interestingly, by sampling the target and auxiliary maximizers $x_{*}$ and $x_{*_j}$ using our MT-RF method proposed in Section \ref{MORF}, the value of $c_j$ in \eqref{approx1} can be approximated by Monte Carlo sampling\footnote{When $j=1$, $c_j$ is equal to $0$ since $x_{*_j} = x_{*}$.}:\vspace{-0mm}
$$
\begin{array}{rcl}
c_j &\hspace{-2.4mm} =&\hspace{-2.4mm} \mathbb{E}_{p(x_{*_j}|y_X)}[f_j(x_{*_j})] - \mathbb{E}_{p(x_{*}|y_X)}[f_j(x_{*})] \vspace{0.5mm}\\
&\hspace{-2.4mm}  \approx&\hspace{-2.4mm} {S}^{-1}\sum^S_{s=1}\left( f^{(s)}_j(x^{(s)}_{*_j}) - f^{(s)}_j(x^{(s)}_{*}) \right).%\vspace{-2mm}
\end{array}
$$
With the multiplicative form of~\eqref{approx1} ,  $p(f^*|y_X, C2, C3)$ can be approximated to be a multivariate Gaussian  
$\mathcal{N}(f^*|\mu, \Sigma)$ using EP by approximating each non-Gaussian factor (i.e., $\Phi_\text{cdf}$ and $\mathbb{I}$) in \eqref{approx1} to be a Gaussian, as detailed in Appendix~\ref{A_EP1}. Consequently, the posterior distribution $p(f_i^*|y_X, C2, C3)$ can be approximated by a Gaussian $\mathcal{N}(f_i^*|\mu_i, \tau_i)$
where $\mu_i$ is the $i$-th component of $\mu$ and $\tau_i$ is the $i$-th diagonal component of $\Sigma$.
%
%\subsubsection{Approximating terms that depend on $x$}\label{EP2}\vspace{-0.5mm}
%

\textbf{Approximating terms that depend on $x$.}
In $C2$ and $C3$, $f_i^*$ is the only term that is related to $C1$. It follows that $f_i(x)$ is conditionally independent of $C2$ and $C3$ given $f_i^*$. Let $f^+ \triangleq [f_i(x_{*}); f_i(x)]$. So, 
%\vspace{-2mm}
%\begin{equation} \label{G1}
$
p(f^+| y_X, C2, C3) = p(f_i(x)|y_X,f_i^*)\ p(f_i^*|y_X, C2, C3) = \mathcal{N}(f^+|\mu^+, \Sigma^+)
$
%\end{equation}
where $\mu^+$ and $\Sigma^+$ can be computed analytically using $\mu_i$, $\tau_i$,  and~\eqref{var}, as detailed in Appendix~\ref{A_G1}.

To involve $C1$, an indicator function $\mathbb{I}(f_i(x) \leq f_i(x_{*}) + \delta_i c_i)$ is used to represent the probability that $C1$ holds. Then, 
$p(f_i(x)|y_X, x_{*}) \approx \int p(f^+|y_X, C1, C2, C2) \ \text{d}f^*_i$ where
\begin{equation} 
\hspace{-1.7mm}
\begin{array}{l}
p(f^+|y_X, C1, C2, C3) \vspace{0.5mm}\\
\displaystyle \approx {Z'}^{-1} p(f^+|y_X, C2)\ \mathbb{I}(f_i(x) \leq f_i(x_{*}) + \delta_i c_i)\ . \hspace{-1.5mm}
\end{array}
\label{approx2}
\end{equation}
Since the posterior of $f_i(x_{*})$ has been updated according to $C2$ and $C3$~\eqref{approx1}, $c_i$ in~\eqref{approx2} is updated likewise:
$$
\begin{array}{c}
c_i \approx {S}^{-1}\sum^S_{s=1}\left( f^{(s)}_i(x^{(s)}_{*_i}) - \mu^{(s)}_i \right)
\end{array}
$$
where $\mu^{(s)}_i$ is computed in~\eqref{approx1} using a sampled $x^{(s)}_{*}$.
%Similar to that in \citep{hernandez2014},~\eqref{approx2} can be computed using a one-step EP by introducing an additional term $a = [-1; 1]$. 
%%Let $r \triangleq a^\top f^+$. 
%In particular, EP is used to approximate the posterior covariance matrix of $f^+$ in \eqref{approx2} to obtain\vspace{-0mm}
Similar to that in \citep{hernandez2014}, a one-step EP can be used to approximate~\eqref{approx2} as a multivariate Gaussian with the following posterior mean vector and covariance matrix:
\begin{equation}
\hspace{-1.7mm}
\begin{array}{rcl}
\mu_{f^+} & \hspace{-2.4mm}\triangleq & \hspace{-2.4mm}  \mu^+ - (\gamma / \sqrt{v}) \Sigma^+ a \vspace{1mm}\\
\Sigma_{f^+} & \hspace{-2.4mm}\triangleq & \hspace{-2.4mm} \Sigma^+ - v^{-1} \gamma (\gamma - (\eta-\delta_ic_i)/{\sqrt{v}})\ \Sigma^+ aa^\top \Sigma^+ \hspace{-7mm}\vspace{-5mm}
\end{array}
\label{EPr}\vspace{4mm}
\end{equation}
where $\gamma \triangleq \phi(({\delta_i c_i - \eta})/{\sqrt{v}})/ \Phi_\text{cdf}(({\delta_i c_i - \eta})/{\sqrt{v}})$, $\eta \triangleq a^\top \mu^+$, $v \triangleq a^\top \Sigma^+ a$ and $a = [-1; 1]$. The derivation of~\eqref{EPr} is in Appendix~\ref{A_EPr}. 
So, the posterior mean and variance of $p(f_i(x)|y_X, x_{*})$ can be approximated, respectively, using the $2$-th component of $\mu_{f^+}$ and $(2,2)$-th component of $\Sigma_{f^+}$ denoted by $\mu_{f_i}$ and $v_{f_i}$.
As a result, the posterior entropy $H(y_i(x) | y_X, x_{*}^{(s)})$ in \eqref{approx0} can be approximated using \eqref{puke} by replacing $\mu_{\{\langle x, i\rangle\}|X}$ and $\sigma^2_{\langle x, i\rangle|X}$  in \eqref{puke} with, respectively, $\mu^{(s)}_{f_i}$ and $v^{(s)}_{f_i}$ where $\mu^{(s)}_{f_i}$ and $v^{(s)}_{f_i}$ are computed in \eqref{EPr} using a sampled $x_*^{(s)}$.\vspace{-1mm}
%
%When the costs of evaluating target vs. auxiliary functions differ, we use the following cost-sensitive MF-PES instead:\vspace{-0mm}
%$
%\alpha_{\text{cost}}(y_X, \langle x, i \rangle ) \triangleq {\alpha(y_X, \langle x, i \rangle )}/{\text{cost}(i)}\vspace{-0mm}
%$
%which can be interpreted as the information gain of the target maximizer per cost 
%of evaluating the $i$-th function $f_i$
%from selecting the next input tuple $\langle x, i \rangle$. Since such a cost (e.g., time incurred to train a ML model) is usually not known, we need a method to estimate it in  real-world applications, which will be discussed later in Section~\ref{experiment}.\vspace{-1.5mm}
%
\section{EXPERIMENTS AND DISCUSSION}\vspace{-0.3mm}
\label{experiment}
This section empirically evaluates the performance of our MT-PES algorithm against that of (a) the state-of-the-art PES \citep{hernandez2014} without utilizing the binary auxiliary information and
%: single-fidelity BO with PES as the acquisition function; 
(b) MT-ES performing Monte Carlo approximation of~\eqref{ES}.
In all experiments, we use $m\triangleq 200$ random features and $S\triangleq 50$ samples of the target maximizer in MT-PES. The input candidates with top $30$ EI values are selected for evaluating MT-ES. The \emph{mixed-type MOGP} (MT-MOGP) hyperparameters are learned via maximum likelihood estimation. The performance of the tested algorithms are evaluated using \emph{immediate regret} (IR) $|f_1(x_{*}) - f_1(\tilde{x}_{*})|$ where 
%$x_*$ is the ground truth and 
$\tilde{x}_{*} \triangleq \mathop{\arg\max}_{x \in D} \mu_{\{\left\langle x, 1 \right\rangle\}|X}$ is their recommended target maximizer. In each experiment, one observation of the target function is randomly selected as the initialization.\vspace{-1mm}
%Costs of evaluating the target and auxiliary functions are assumed to be, respectively, $10$ and $1$ in all synthetic experiments.
%
%\subsection{Synthetic experiments}\vspace{-1mm}
%
%The performance of the tested algorithms are evaluated first using benchmark and synthetic functions, and then by tuning the hyperparameters of ML models in two image classification tasks. 
%
\subsection{SYNTHETIC EXPERIMENTS} \vspace{-1mm}
%
%The performance of the tested algorithms are first evaluated using some synthetic functions.
The performance of the tested algorithms are firstly evaluated using synthetic and benchmark functions.
\begin{figure*}
	\centering
\begin{tabular}{ccccc}
		\hspace{-2.5mm}\includegraphics[scale=0.20]{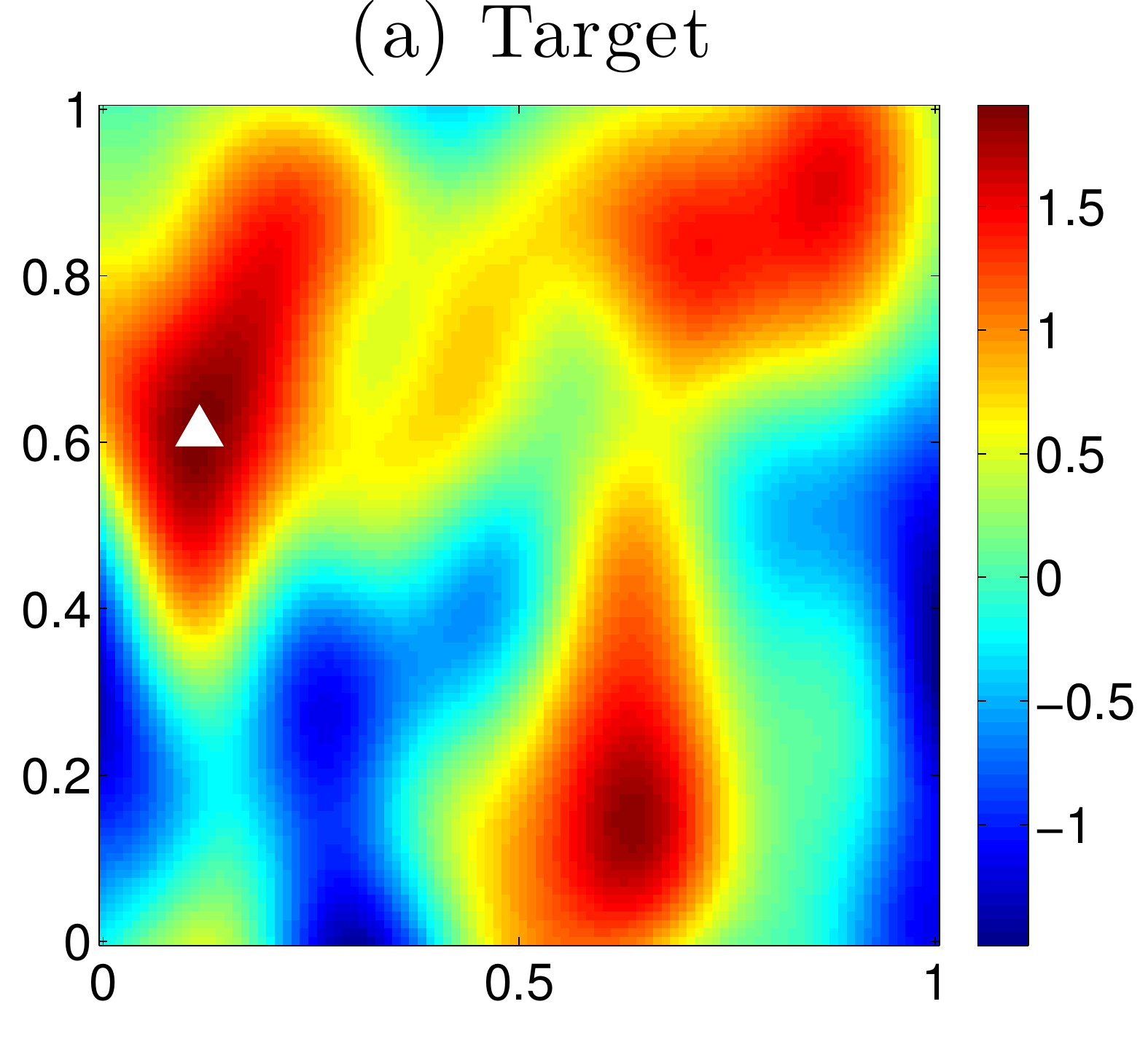} &
		\hspace{-3mm}\includegraphics[scale=0.20]{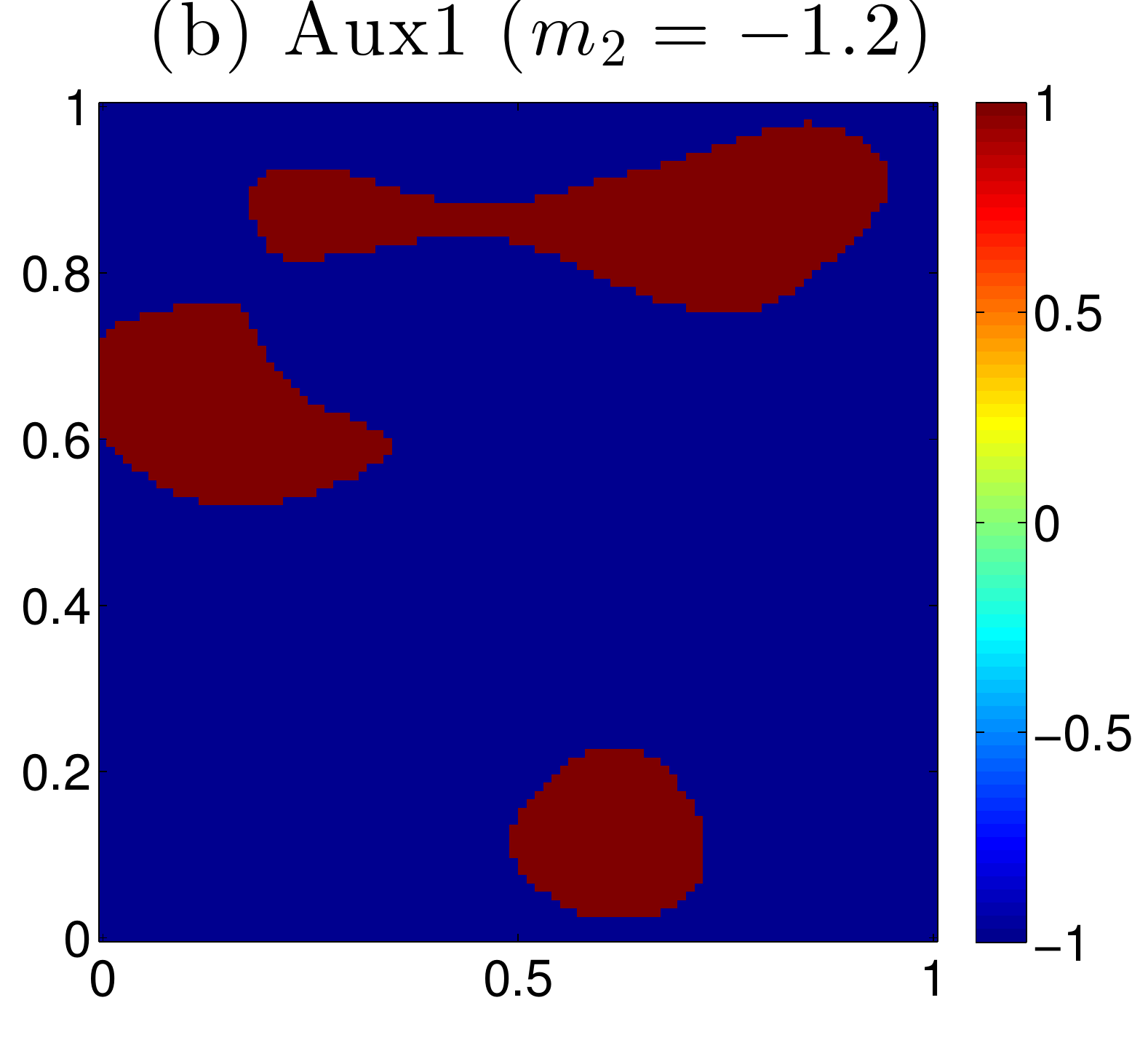} &
		\hspace{-3mm}\includegraphics[scale=0.20]{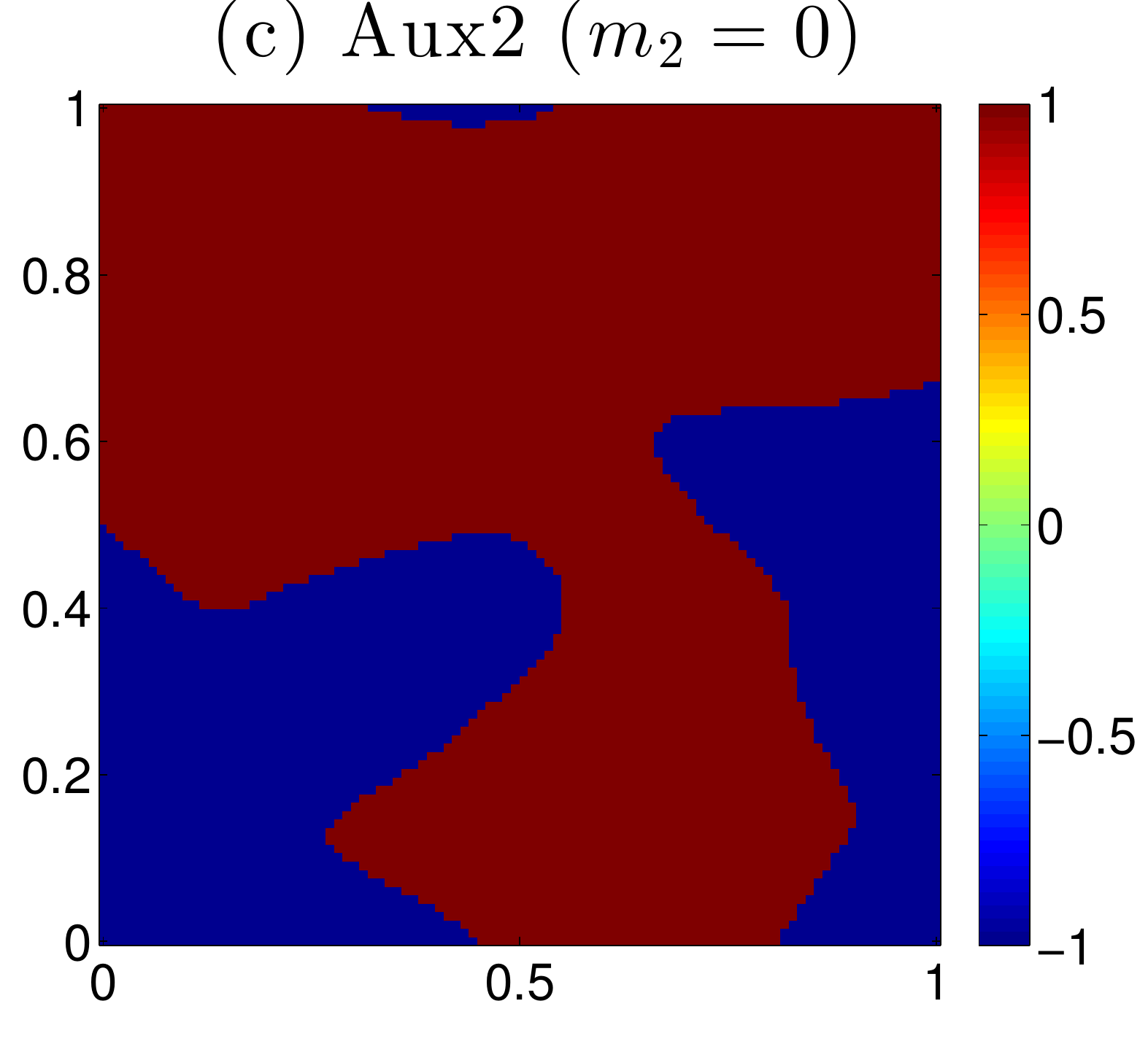} &
		\hspace{-3mm}\includegraphics[scale=0.20]{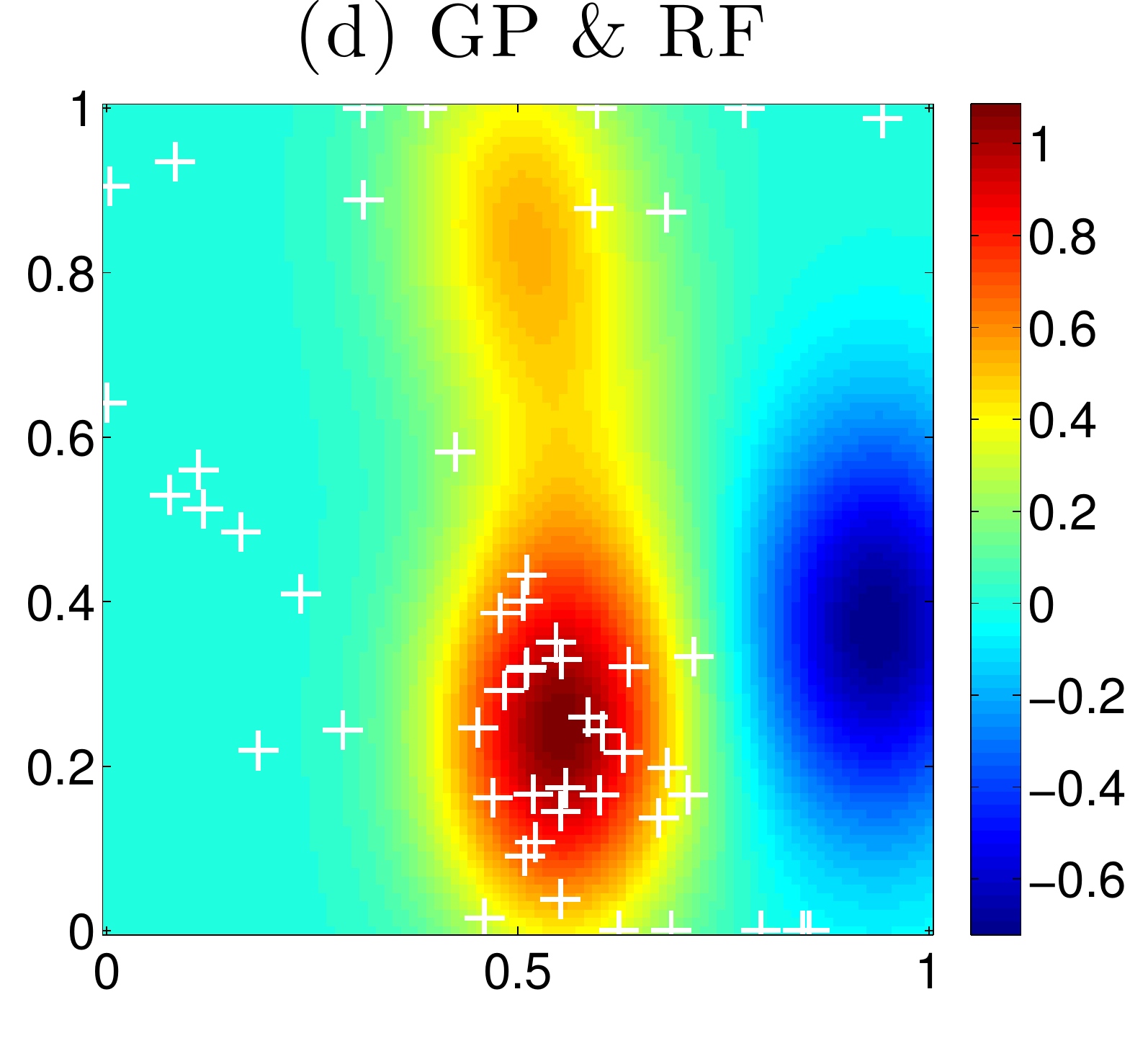} & 
		\hspace{-3mm}\includegraphics[scale=0.20]{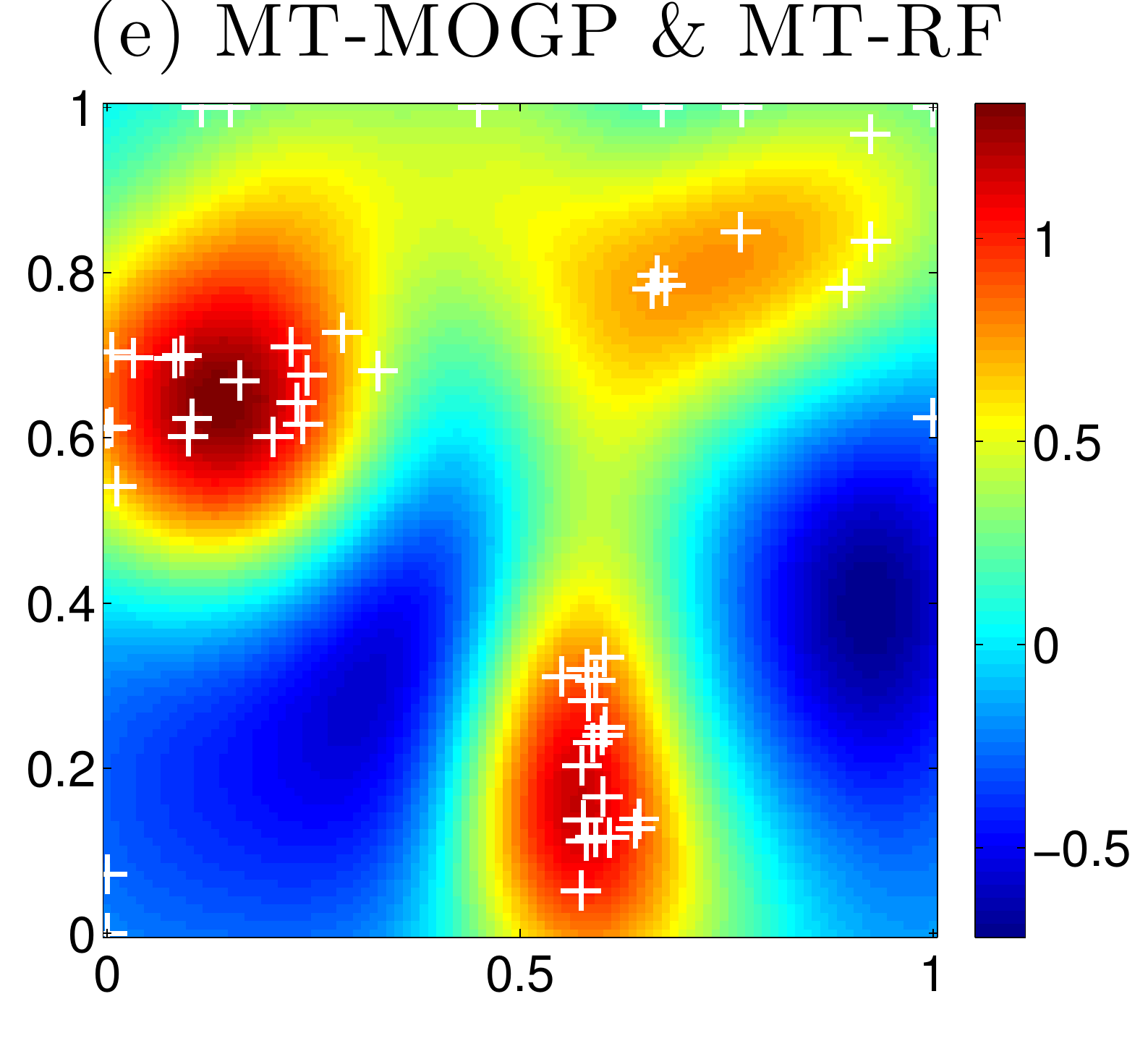}\vspace{-4mm}\\
	\end{tabular}
	\caption{(a-c) Example of the synthetic functions where `$\triangle$' is the global target maximizer, (d) target function predicted by conventional GP model and the target maximizers (`$+$' ) sampled by RF with $5$ observations from evaluating the target function, and (e) target function predicted by MT-MOGP model and the target maximizers (`$+$' ) sampled by MT-RF with $5$ and $50$ observations from evaluating the target and aux1 functions, respectively.}\vspace{-1mm}
	\label{fig:syn}
\end{figure*}
\begin{figure*}
	\centering
	\begin{tabular}{ccc}
		\hspace{-0mm}\includegraphics[scale=0.22]{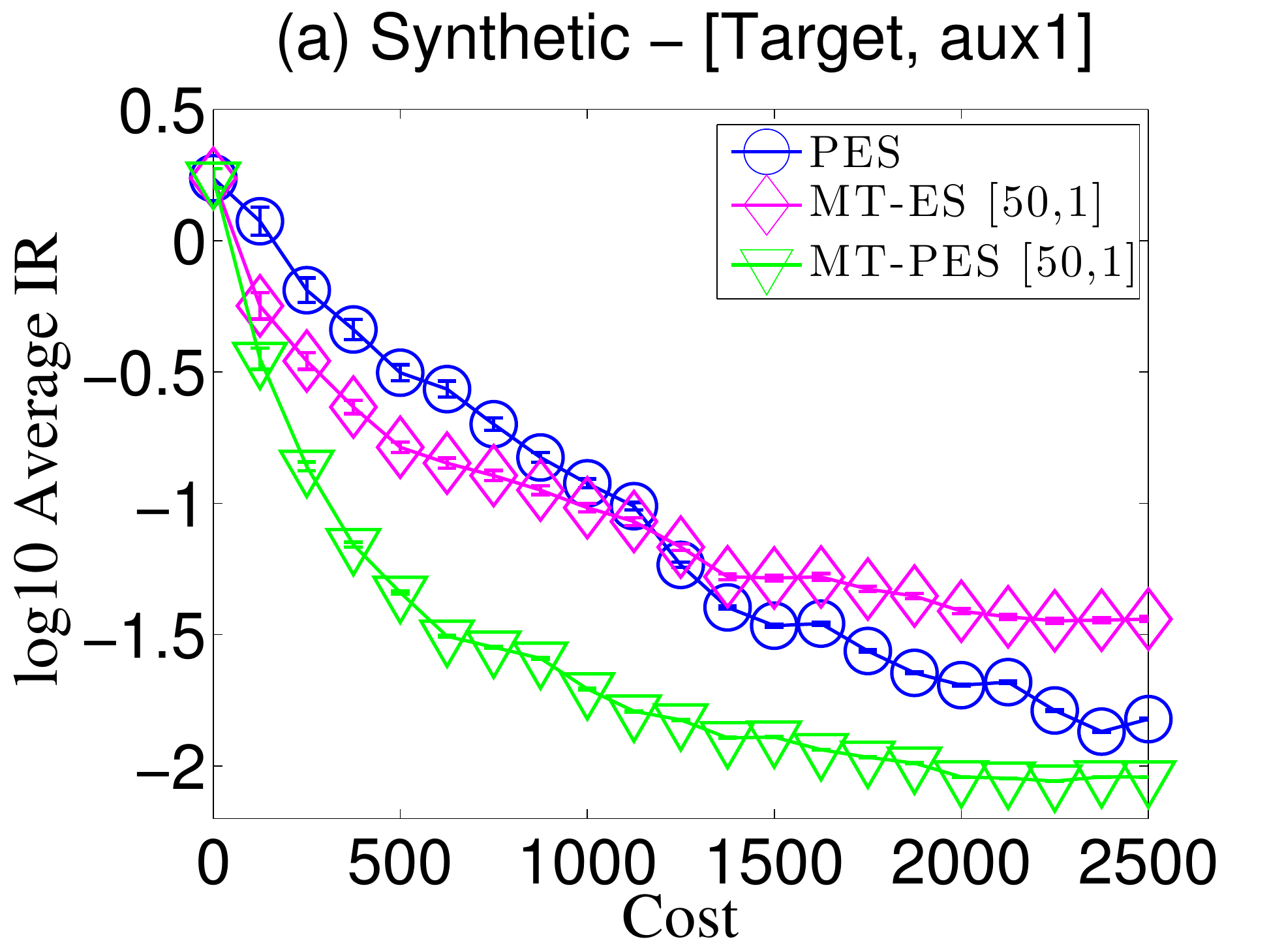} &
		\hspace{-0mm}\includegraphics[scale=0.22]{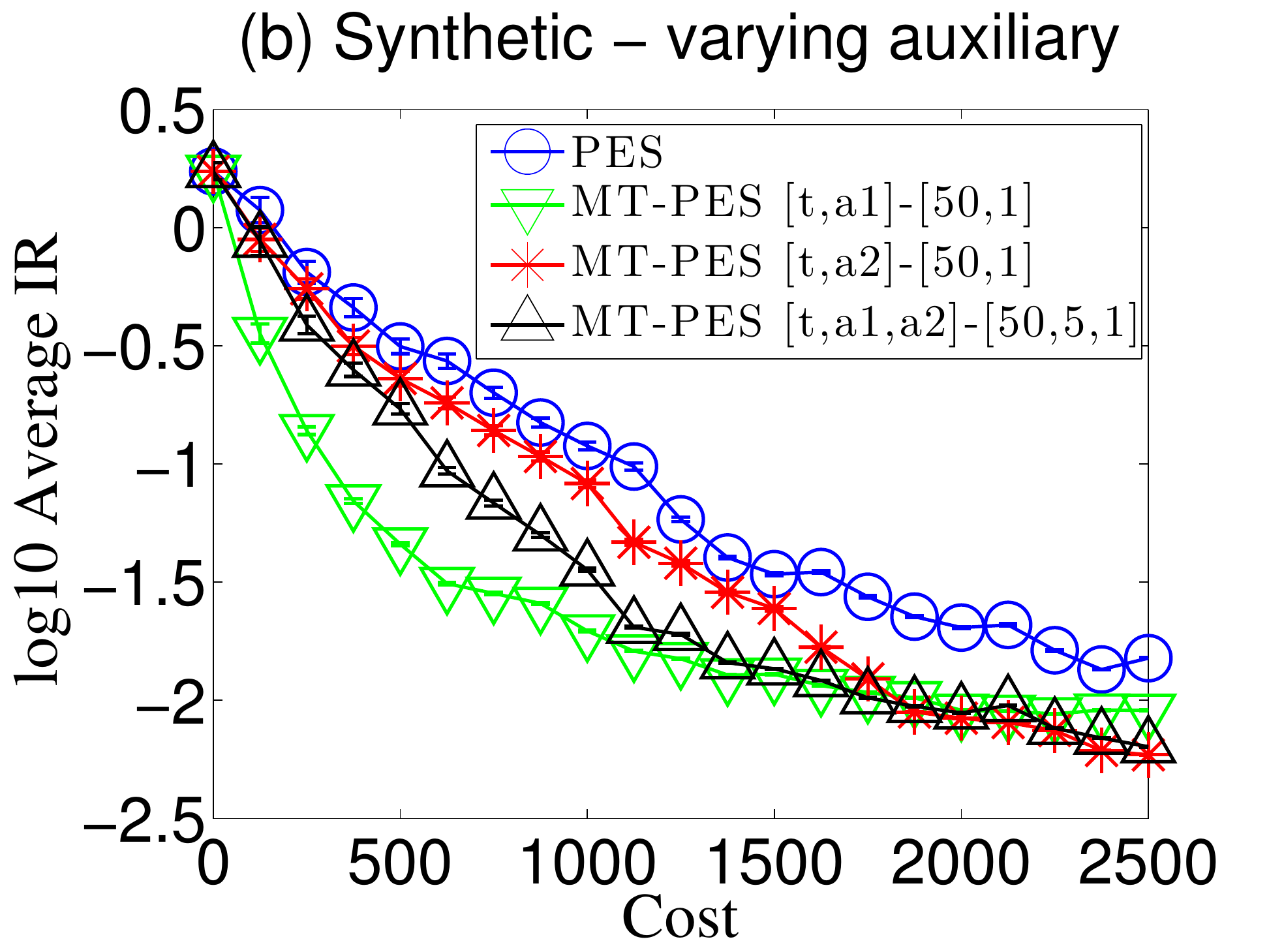} &
 		\hspace{-0mm}\includegraphics[scale=0.22]{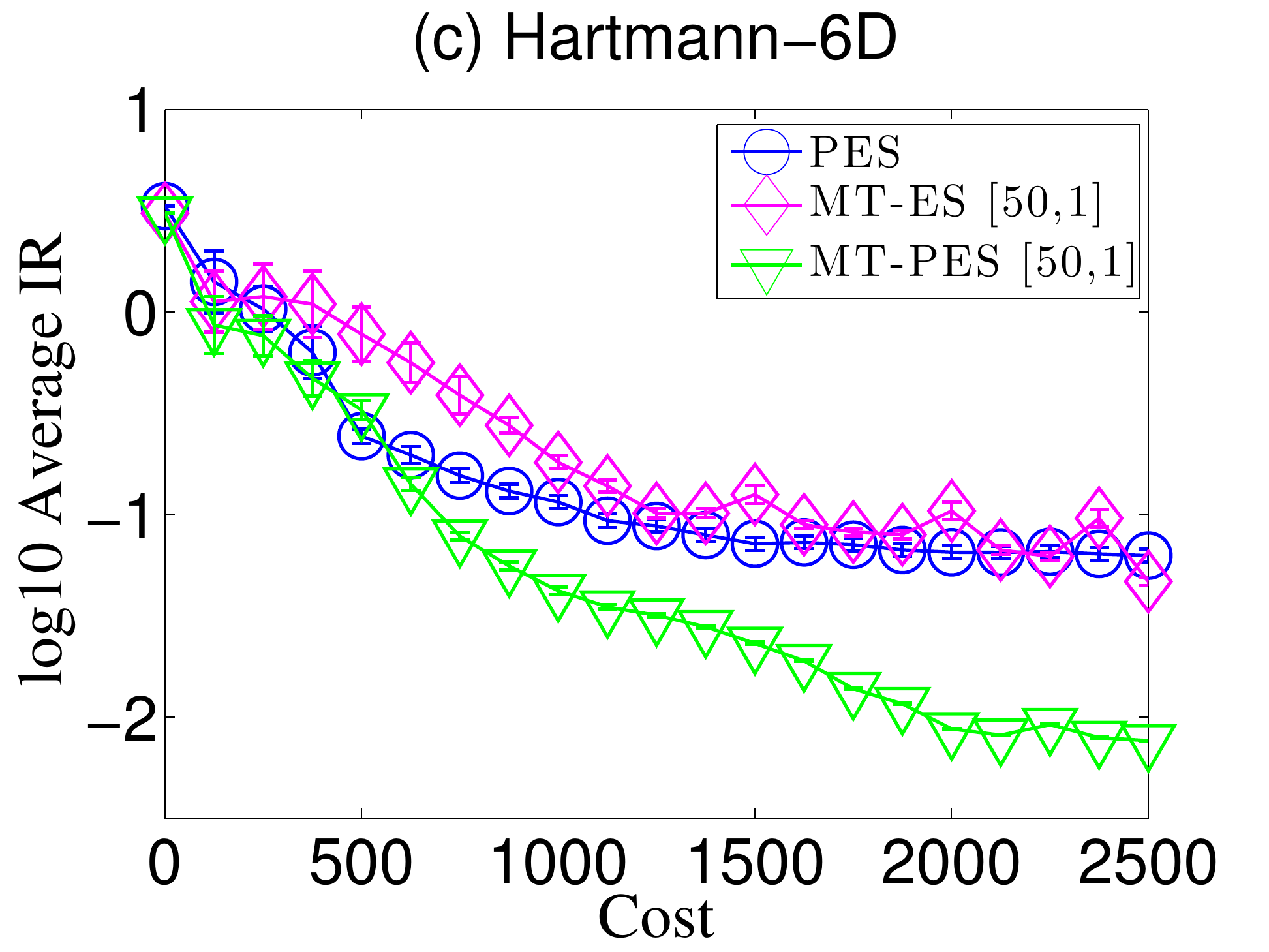}
		 		\vspace{-4mm} \\
	\end{tabular}
	\caption{Graphs of $\log_{10}(\text{averaged IR})$ vs. cost incurred by tested algorithms for (a-b) synthetic functions and (c) Hartmann-6D function. The type and cost of functions used in each experiment are shown in the title and legend of each graph where `t' denotes target function and `a1' and `a2' denote aux1 and aux2 functions, respectively. The error bars are computed in the form of standard error.}\vspace{-2mm}
	\label{fig:syn1}
\end{figure*}

%\subsubsection{Synthetic functions}
{\bf Synthetic functions.} The synthetic functions are generated using $M\triangleq 2$ and $D \triangleq [0, 1]^2$. To do this, the CMOGP hyperparameters with one latent function are firstly fixed as the values shown in Appendix \ref{a_syn},
%$\Lambda \triangleq \text{diag}[100, 100], P_1 \triangleq \text{diag}[2000, 100], P_2 \triangleq \text{diag}[100, 2000], \sigma_{s_1} \triangleq \sigma_{s_2} \triangleq 1, \sigma^2_{n_1} \triangleq 0.01$,  and  $\sigma^2_{n_2} \triangleq 0.001$
which are also used in the tested algorithms as optimal hyperparameters. 
Then, a set $X$ of $450$ input tuples are uniformly sampled from $D^+$ and their corresponding outputs are sampled from the CMOGP prior. The target function is set to be the predictive mean $\mu_{\{\langle x, 1 \rangle\}|X}$ of the CMOGP model. The outputs of the auxiliary function are set to be $1$ if $\mu_{\{\langle x, 2 \rangle\}|X} \geq 0$, and $-1$ otherwise.
An example of the synthetic functions can be found in Figs.~\ref{fig:syn}a to~\ref{fig:syn}c.
As can be seen in Figs.~\ref{fig:syn}b and~\ref{fig:syn}c, we can generate multiple auxiliary functions with different proportions of positive outputs from a target function (Fig.~\ref{fig:syn}a) by varying the bias $m_2$. All these auxiliary functions correlate well with the target function but 
%provide different information in the search regions of the target maximizer 
delineate the input regions containing the target maximizer differently 
and thus result in different MT-PES performance, as will be shown later.

{\bf Empirical analysis of MT-MOGP and MT-RF.}
Firstly, we verify that the MT-MOGP model and MT-RF can outperform the conventional GP model and single-output RF by exploiting cross-correlation structure between the target and auxiliary function aux1 (i.e., Figs.~\ref{fig:syn}a and~\ref{fig:syn}b).
% using the target and aux1 functions.
Figs.~\ref{fig:syn}d and~\ref{fig:syn}e show the predictive mean and the sampled maximizers of the target function using randomly sampled observations.
By comparing Figs.~\ref{fig:syn}d and~\ref{fig:syn}e with Fig.~\ref{fig:syn}a, it can be observed that the MT-MOGP model and MT-RF can predict the target function and sample the target maximizer more accurately than the conventional GP model and single-output RF using an additional $50$ observations from evaluating aux1.

{\bf Empirical analysis of mixed-type BO.}
Next, the performance of the tested BO algorithms are evaluated using ten groups (i.e., one target function, two auxiliary functions aux1 and aux2 with different $m_2$) of synthetic functions generated using the above procedure. We adjust $m_2$ such that around $20\%$ of auxiliary outputs are positive for each aux1 and set $m_2 = 0$ for each aux2. An averaged IR is obtained by optimizing the target function in each of them with $10$ different initializations for each tested algorithm.

Fig.~\ref{fig:syn1} shows the results of all tested algorithms for synthetic functions with a cost budget of $2500$.
From Fig.~\ref{fig:syn1}a, MT-PES can achieve a similar averaged IR with a much lower cost than PES, which implies that the BO performance can be accelerated by exploiting the binary auxiliary information of lower evaluation cost. MT-ES achieves lower averaged IR than PES with a cost less than $1000$ but unfortunately performs less well in the remaining BO iterations.
%From observing the detailed behaviors of MT-ES, we notice that
Even though the cheap auxiliary outputs provide additional information for finding the target maximizer at the beginning of BO, the multimodal nature of the synthetic function (see Fig.~\ref{fig:syn}a) causes MT-ES to be trapped easily in some local maximum since its search space has been pruned using EI for time efficiency.

To investigate how the performance of MT-PES will be affected by the proportion of positive outputs in different auxiliary functions,  we vary the number and bias $m_2$ of the auxiliary function(s) and show the results in Fig.~\ref{fig:syn1}b.
It can be observed that MT-PES using aux2 as the auxiliary function does not converge as fast as MT-PES using aux1, which is expected since aux2 with a larger proportion of positive outputs is less informative in 
delineating the input regions containing the target maximizer
%indicating the searching area of the target maximizer 
than aux1. Also, Fig.~\ref{fig:syn1}b shows that MT-PES is able to exploit multiple auxiliary functions with different costs to achieve a lower averaged IR than PES with a much lower cost.

\emph{Remark.} From the results in Fig.~\ref{fig:syn1}b, one may expect MT-PES to converge faster using an auxiliary function with a smaller proportion of positive outputs, which is not always the case. If the auxiliary function has sparse positive outputs, MT-PES will face difficulty finding a positive output when exploring the auxiliary function and start to evaluate the target function after only several negative outputs are observed from evaluating the cheap auxiliary function. These negative outputs may not be informative enough to guide the algorithm to directly evaluate the target function near to the likely target maximizer.
%
%In the real-world applications, the positive outputs proportion of an auxiliary function is usually unknown.
To reduce the negative effect of such an unexpected behavior in real-world applications with an unknown auxiliary function, we can set MT-PES to evaluate only the auxiliary function using a small amount (e.g., $10\%$) of the budget at the beginning of BO so that positive auxiliary outputs are highly likely to be observed before MT-PES chooses to evaluate the expensive target function.

To provide more insight into the approximations of MT-PES, we follow the PES paper \citep{hernandez2014} and show the accuracy of the EP approximations (Section~\ref{approxPES}) compared to that of the ground truth constructed using the rejection sampling method.
To verify how sensitive the performance of MT-PES is to different settings, we have also evaluated the performance of the tested algorithms using synthetic functions with varying costs $\lambda_i$, random features dimension $m$, and sampling size $S$. The results are reported in Appendix~\ref{a_syn}.
% due to lack of space.

{\bf Hartmann-6D function}. The original Hartmann-$6$D function is used as the target function and to construct the binary auxiliary function, as detailed in Appendix~\ref{a_bench}. 
Fig.~\ref{fig:syn1}c shows results of the tested algorithms with $10$ different initializations. 
It can be observed that MT-PES converges faster to a lower averaged IR than PES. However, MT-ES does not perform well for Hartmann-$6$D function which is difficult to optimize due to their multimodal nature (i.e., $1$ global maximum and $6$ local maxima) and large input domain. The former causes MT-ES to be trapped easily in some local maximum while the latter prohibits MT-ES from finely discretizing the input domain to remain computationally tractable.\vspace{-1mm}

\subsection{REAL-WORLD EXPERIMENTS}\label{experi:real}\vspace{-1mm}
The tested algorithms are next used in hyperparameter tuning of a ML model in an image classification task and policy search for reinforcement learning.

{\bf Convolutional neural network (CNN) with CIFAR-10 dataset.} 
%We applied the tested algorithms to a CNN model on the famous CIFAR-10 dataset. 
The six CNN\footnote{We use the %same CNN structure as the
	example code of keras (i.e., cifar10\_cnn.py)
	%:\url{https://github.com/fchollet/keras/blob/master/examples/cifar10_cnn.py}
	and switch the optimizer in their code to SGD.} hyperparameters to be tuned in our experiments are the learning rate of SGD in the range of $[10^{-5}, 1]$, three dropout rates in the range of $[0, 1]$,  batch size in the range of $[100, 1000]$, and number of learning epochs in the range of $[100, 1000]$.
We use training and validation data of size $50000$ and $10000$, respectively. The unknown target function to be maximized is the validation accuracy evaluated by training the CNN with all the training data.
The auxiliary function is the decision made using the \emph{Bayesian optimal stopping} (BOS) mechanism in~\citep{dai2019,muller2007} by setting $0.5$ as a threshold of the validation accuracy. In particular, we train the same CNN model with a smaller fixed dataset of size $10000$ randomly selected from the original training data and apply the BOS after $20$ training epochs. The BOS will early-stop the training and return $1$ if it predicts that a final validation accuracy of $0.5$ can be achieved with a high probability, and $-1$ otherwise.\footnote{A description of BOS is provided in Appendix~\ref{a_bos}.} 
%%%
%The evaluation cost should be the time for training the CNN. However, since
The real training time is not known and varies with different settings of hyperparameters. To simplify the setting of the evaluation costs, we use $\lambda_1(x) = 1$ and $\lambda_2(x) = 0.2\times(20/x_\text{epochs})$ where $x_\text{epochs}$ is the number of learning epochs in each selected hyperparameter setting.\footnote{We use $20\%$ of the training data for evaluating the auxiliary function and early-stop the training after around $20$ epochs.}
For this experiment, we additionally compare the tested algorithms with \emph{multi-fidelity GP-UCB} (MF-GP-UCB) \citep{kandasamy2016} that can only exploit \emph{continuous} auxiliary functions. The auxiliary function of MF-GP-UCB is the validation accuracy evaluated by training the same CNN with the same data used for the auxiliary function of MT-PES.\footnote{One may consider constructing the auxiliary function of MF-GP-UCB with an even smaller training dataset such that its cost is similar to that of the binary auxiliary function. However, for any smaller training dataset, we can always early-stop the training and achieve a much cheaper binary auxiliary function, as compared to the continuous auxiliary function of MF-GP-UCB constructed using the same dataset.}
The actual wall-clock time shown in the results includes the time of both CNN training and BO.
The validation accuracy $f_1(\tilde{x}_{*})$ is evaluated by training the CNN with $\tilde{x}_{*}$ for the tested algorithms.

{\bf Policy search for reinforcement learning (RL).} 
We apply the tested algorithms to the CartPole task from OpenAI Gym and use a linear policy consisting of $8$ parameters in the range of $[0, 1]$. This task is defined to be a success (i.e., reward of $1$) if the episode length reaches $200$, and a failure (reward of $-1$) otherwise. 
The target function to be maximized is the success rate averaged over $100$ episodes with random starting states. The auxiliary function is the reward of one episode with a fixed starting state $(0, 0, 0.02, 0.02)$. $\lambda_1(x) = 100$ and $\lambda_2(x) = 1$ are used in the experiments.
The success rate $f_1(\tilde{x}_{*})$ is evaluated by running the CartPole task with $\tilde{x}_{*}$ as the policy parameters over $100$ episodes for the tested algorithms.
\begin{figure}
	\centering
	\begin{tabular}{cc}
		\hspace{-2.5mm}\includegraphics[scale=0.215]{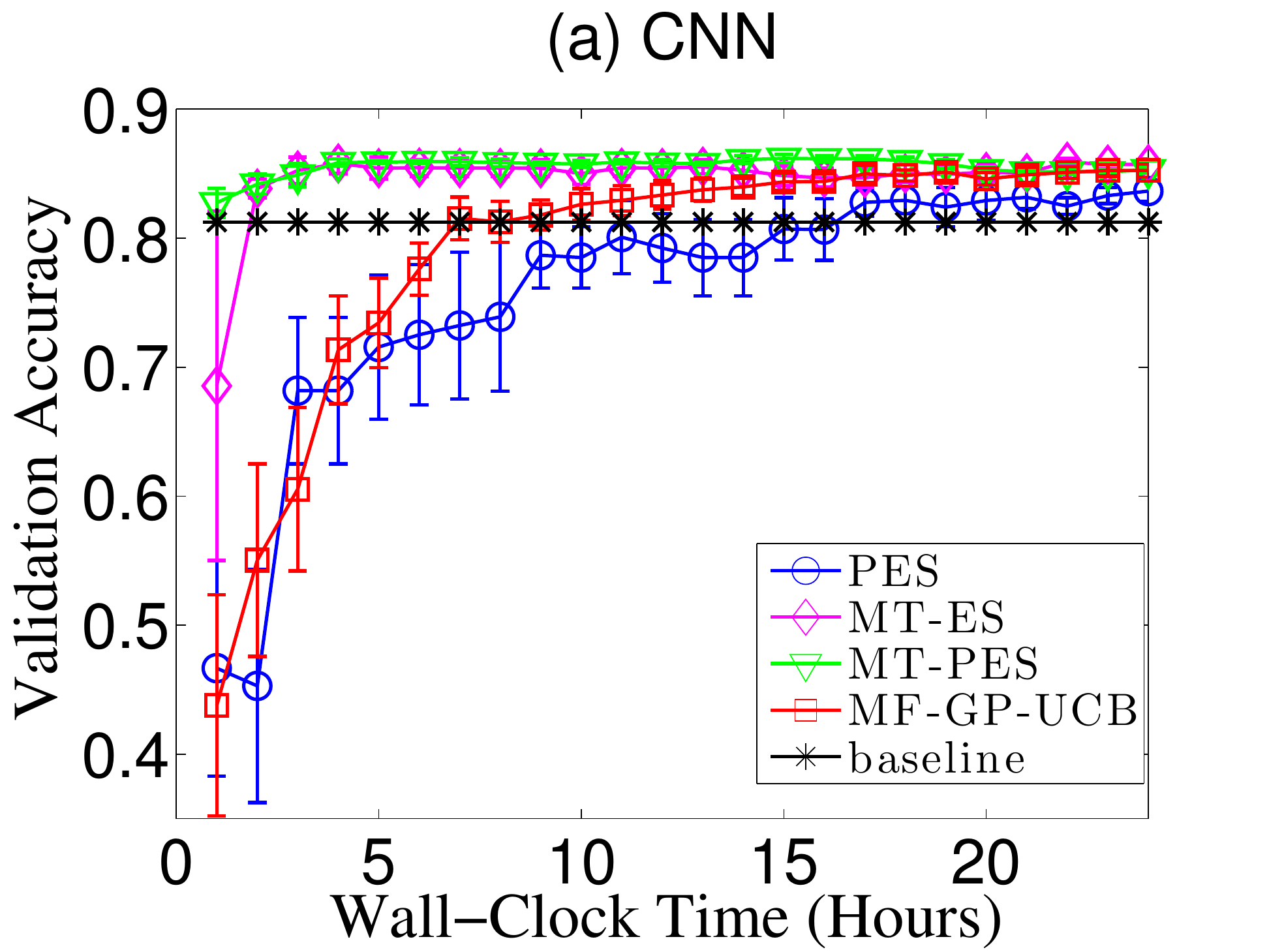} &
		\hspace{-7mm}\includegraphics[scale=0.215]{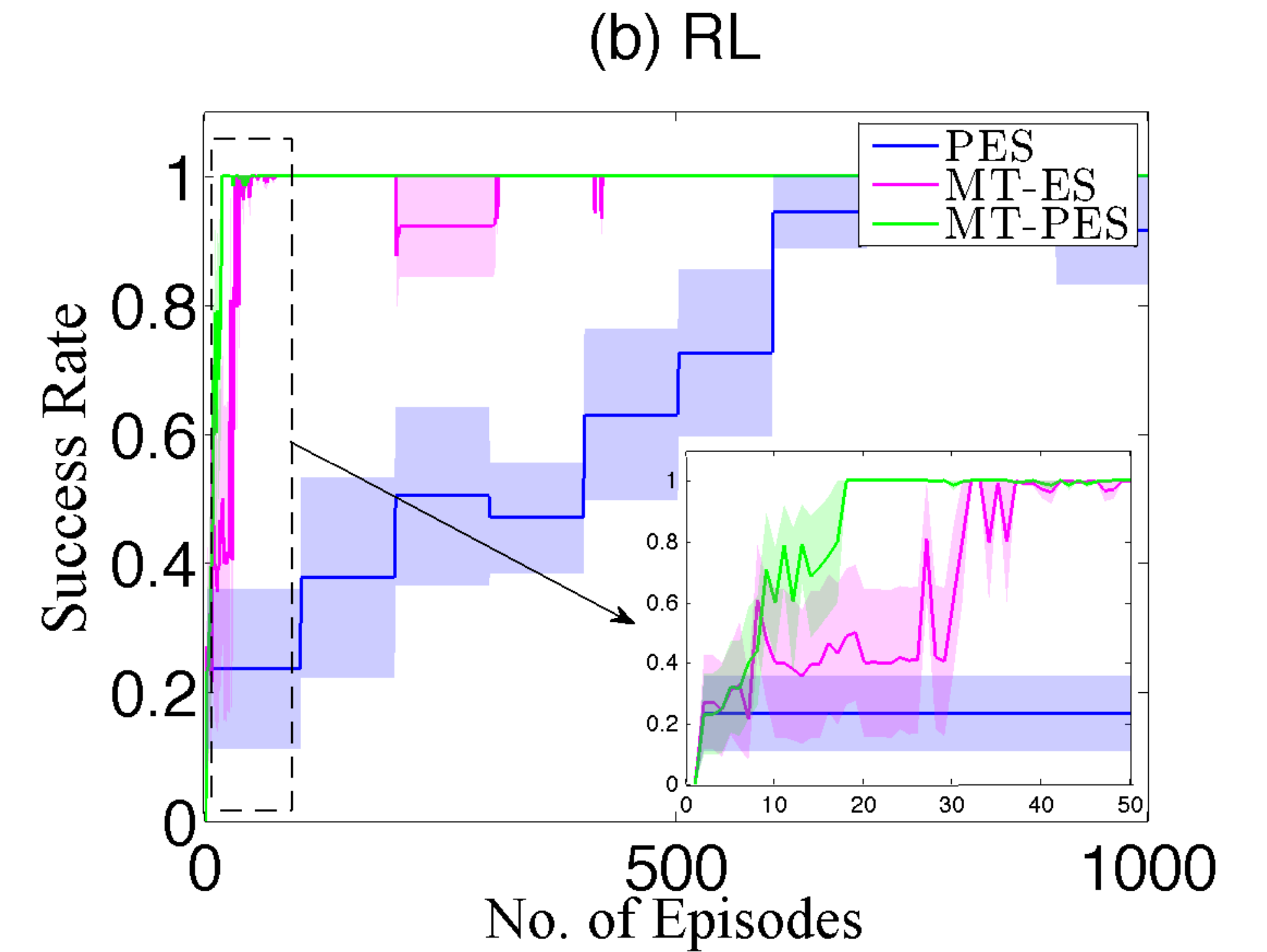} \vspace{-3mm}\\
	\end{tabular}
	\caption{Graphs of (a) validation accuracy vs.~wall-clock time incurred by tested algorithms for CNN and (b) success rate vs.~no. of episodes incurred by tested algorithms for RL. The results for the first 50 episodes are zoomed in for a clearer comparison.}\vspace{-3mm}
	\label{fig:real}
\end{figure}

Fig.~\ref{fig:real} shows results of the tested algorithms with $5$ different initializations for the CNN hyperparameter tuning and RL policy search tasks.
It can be observed that both MT-ES and MT-PES converge faster to a smaller IR than other tested algorithms. MT-PES also converges faster than MT-ES in both experiments.
MT-ES and MT-PES outperform MF-GP-UCB since evaluating the \emph{binary} auxiliary function by early-stopping the CNN training incurs much less time than evaluating the true validation accuracy for MF-GP-UCB.
Using only $1$ hour, MT-PES can improve the performance of CNN over that of the baseline achieved using the default hyperparameters in the existing code, which shows that MT-PES is promising in quickly finding more competitive hyperparameters of complex ML models.%\vspace{-1mm}
\section{CONCLUSION}%\vspace{-2mm}
This paper describes novel MT-ES and MT-PES algorithms for mixed-type BO that can exploit cheap binary auxiliary information for accelerating the optimization of a target objective function.
A novel mixed-type CMOGP model and its MT-RF approximation are proposed for improving the belief of the unknown target function and the global target maximizer using observations from evaluating the target and binary auxiliary functions.
New practical constraints are proposed to relate the global target maximizer to the binary auxiliary functions such that MT-PES can be approximated efficiently. Empirical evaluation on synthetic functions and real-world applications shows that MT-PES outperforms the state-of-the-art BO algorithms.
For future work, our proposed mixed-type BO algorithms can be easily extended to handle both binary and continuous auxiliary information, hence generalizing multi-fidelity PES~\citep{yehong17}.\footnote{A closely related counterpart is multi-fidelity active learning~\citep{YehongAAAI16}.}
%We also plan to use MT-ES and MT-PES in other applications with iterative function evaluations and generalize them to the batch\footnote{A closely related counterpart is batch active learning~\citep{LowAAMAS12,LowAAMAS14}.}~\citep{Erik17} and high-dimensional~\citep{NghiaAAAI18} BO settings, as well as to the non-myopic context by appealing to existing literature on nonmyopic BO~\citep{ling16} and active learning~\citep{LowAAMAS13,LowECML14b,NghiaICML14,LowAAMAS08,LowICAPS09,LowAAMAS11,LowDyDESS14}.
% as well as (b) to be performed by a multi-robot team to find hotspots in environmental sensing/monitoring by seeking inspiration from existing literature on multi-robot active sensing/learning~\cite{LowUAI12,LowRSS13,LowTASE15,LowAAMAS12,LowAAMAS14}.
%For applications with a huge budget of function evaluations, we like to generalize MT-RF to distributed/decentralized~\citep{LowUAI12,Chen13,LowRSS13,LowTASE15,HoangICML16,NghiaAAAI19,low15,Ruofei18} or online/stochastic~\citep{NghiaICML16,MinhAAAI17,LowECML14a,LowAAAI14,HaibinAPP} variants to represent the belief of the unknown objective function efficiently.

{\bf Acknowledgements.}
This research is supported by the Singapore Ministry of Education Academic Research Fund Tier $2$, MOE$2016$-T$2$-$2$-$156$.

\bibliographystyle{natbib}
\bibliography{MTPES}

\appendix
\onecolumn

\section{RELATED WORK}\label{related}
Some existing BO works focus on optimizing a target function with a binary output type~\citep{gonzalez2017,tesch13} but have not considered utilizing the binary outputs for optimizing other correlated function which is more expensive to evaluate. The Bernoulli multi-armed bandit problem \citep{russo2018} assumes binary reward for each action and aims to maximize the cumulative rewards. However, the correlations between the arms and the cross-correlation between the \emph{immediate} binary reward and the averaged reward are ignored. Other than the multi-fidelity BO algorithms (Section~\ref{intro}), the constrained BO algorithms \citep{hernandez2016} also involve multiple functions (unknown target function and constraints) when optimizing the target function. Different from our mixed-type BO algorithms that can exploit the cross-correlation structure between the target and binary auxiliary functions, the constrained BO algorithms only consider continuous output types for the unknown constraints and assume the target and constraint functions to be independent.
Similar to our CNN experiment (Section~\ref{experi:real}), some hyperparameter optimization methods such as Hyperband~\citep{li2018} and BOHB~\citep{falkner2018} have considered speeding up their optimization process by early-stopping the training of underperforming models and continuing that of only the highly ranked ones.
However, both methods require the outputs (e.g., validation accuracy) to be continuous for ranking and do not consider the binary auxiliary information. Given the above idea, one may be tempted to exploit the binary information in a similar way: The binary auxiliary function is evaluated for a batch of inputs, and the target function is only evaluated at those inputs in the batch that yield positive auxiliary outputs for finding the global maximum. To achieve this, some important issues need to be considered:
(a) Which inputs should we select to evaluate the binary auxiliary function?
(b) How many binary auxiliary outputs should we sample before evaluating the expensive target function?
(c) If a large proportion of inputs in the batch yield positive auxiliary outputs, then evaluating the target function for all of them can also be very expensive. Which inputs should we select for evaluating the target function such that the global target maximizer can be found given a limited budget?
Our proposed MT-ES and MT-PES have resolved all the above issues in a principled manner.

%\cite{hashimoto2018} proposed an optimization method which repeatedly predicts the sublevel sets of the unknown target function using a family of classifiers.
%In the work of Hashimoto et al. (2018), they have assumed large batches (hundreds to thousands) of parallel queries and the threshold used in the classifier can be improved adaptively according to a large number of continuous outputs. In such settings (i.e., large batches of observations and adaptive threshold), we think that it is interesting to exploit only the binary information for optimizing an unknown function, which we will consider in our future work. However, in our current work, we focused on fixed auxiliary function(s) (e.g., fixed threshold for achieving the positive/negative output labels in CNN training) and the number of auxiliary observations is relatively small due to a limited budget.

\section{DERIVATION OF~\eqref{var}}\label{a_pred}

Since $f_1, \ldots, f_M$ are jointly modeled as a CMOGP, we know that
\begin{equation}\label{pfA}
p(f_A|f_{A'}) = \mathcal{N}(f_A|\mu_A+\Sigma_{AA'}\Sigma^{-1}_{A'A'}(f_{A'}-\mu_{A'}),\ \Sigma_{AA} - \Sigma_{AA'}\Sigma^{-1}_{A'A'}\Sigma_{A'A})
\end{equation} 
for any $A,A'\subseteq D^+$ \citep{Alvarez2011}.
Then,
\begin{equation}\label{qfX}
\begin{array}{rl}
q(f_X) \hspace{-2.8mm}&= q(f_{X_1}, f_{X_B}) \triangleq p(f_{X_1}|f_{X_B}) q(f_{X_B}) \vspace{1mm} \\
\hspace{-2.8mm}&\approx \mathcal{N}(f_{X}|\mu_{X}+\Sigma_{XX_B}(\Sigma_{X_BX_B} + \tilde{\Sigma}_B)^{-1}(\tilde{\mu} - \mu_{X_B}), \Sigma_{XX} - \Sigma_{XX_B}(\Sigma_{X_BX_B} + \tilde{\Sigma}_B)^{-1}\Sigma_{X_BX})
\end{array}
\end{equation}
due to \eqref{qfB}, \eqref{pfA}, and equation $9$c in \citep{schon2011}.
As a result, the posterior distribution $p(f_{X_1}, f_{X_B} |  y_{X_1}, y_{X_B})$ can be approximated with a multivariate Gaussian distribution:
\begin{equation}\label{pf}
\begin{array}{rcl}
p(f_{X_1}, f_{X_B} | y_{X_1}, y_{X_B}) &\hspace{-2.4mm} = &\hspace{-2.4mm}\displaystyle\frac{1}{Z} \ p(f_{X_1}|f_{X_B})\ p(y_{X_1} | f_{X_1})\ q(f_{X_B}) = \frac{1}{Z}\ p(y_{X_1} | f_{X_1})\ q(f_{X}) \vspace{1mm} \\
&\hspace{-2.4mm}\approx &\hspace{-2.4mm}\mathcal{N}(f_{X}|\mu_X + \Sigma_{XX}\Lambda^{-1}(\tilde{y}_X - \mu_X), \Sigma_{XX} - \Sigma_{XX}\Lambda^{-1}\Sigma_{XX})\ .
\end{array}
\end{equation}
The first equality is due to \eqref{pfX}. The last approximation is due to \eqref{qfX}, equation $10$f in \citep{schon2011}, and $p(y_{X_1}|f_{X_1}) \triangleq \mathcal{N}(y_{X_1}|f_{X_1}, \sigma_n) = \mathcal{N}(y_{X_1}|Mf_X, \Sigma_n)$ where $M \triangleq [I_{|X_1|\times|X_1|}, 0_{|X_1|\times|X_B|}]$.
Finally, the predictive belief in \eqref{var} can be obtained using \eqref{pfA}, \eqref{pf}, and equation $10$c in \citep{schon2011}.
\section{DETAILS OF MIXED-TYPE RANDOM FEATURES (MT-RF)}\label{a_conv}
Using some results of~\citet{rahimi2007}, the prior covariance of the GP modeling $L$ (Section~\ref{CMOGP}) can be rewritten as
\begin{equation}\label{Bochner}
\sigma_{xx'}  = \alpha\int p(w)\ e^{-jw^\top(x-x')}\ \text{d}w = 2\alpha\ \mathbb{E}_{p(w, b)}[\cos (w^\top x+b)\cos (w^\top x'+b)]
\end{equation}
where $p(w) \triangleq s(w)/\alpha$, $s(w)$ is the Fourier dual of $\sigma_{xx'}$, and $b \sim \mathcal{U}[0, 2\pi]$. Let $\phi(x)$ denote a random vector of an $m$-dimensional feature mapping of the input $x$:
\begin{equation} \label{phi}
\phi (x) \triangleq \sqrt{2\alpha /m}\ \cos (W^\top x+B)
\end{equation}
where $W \triangleq ({w_q})_{q=1,\ldots, m}$ and $B \triangleq ({b_q})^\top_{q=1,\ldots,m}$ with $w_q$ and $b_q$ sampled from $p(w)$ and $p(b)$, respectively. From~\eqref{Bochner} and~\eqref{phi}, the prior covariance $\sigma_{xx'}$ can be approximated by $\sigma_{xx'} \approx \phi(x)^\top \phi(x')$ and the latent function $L$ can be approximated by a linear model:
\begin{equation}\label{L}
L(x) \approx \phi (x)^\top \theta\ .
\end{equation}
Next, we will show how to derive the following approximation of $f_i(x)$:
\begin{equation}\label{conv}
f_i(x) \approx m_i + \phi_i(x)^\top \theta\ .
\end{equation}
\subsection{DERIVATION OF~\eqref{conv}}
Firstly, let $A$ be a $d \times d$ positive-definite diagonal matrix and $x$, $x'$, $w$, and $b$ be $d$-dimensional vectors. Then, the following convolutional result can be derived to be used in our derivation of~\eqref{conv}:
\begin{equation}\label{conv_exp}
\begin{array}{l}
\displaystyle \int_{x' \in D} e^{-\frac{1}{2}(x-x')^\top A(x-x')} e^{j(w^\top x'+b) }\ \text{d}x' \vspace{1mm}\\
\displaystyle= e^{jb}\int_{x' \in D} e^{-\frac{1}{2}(x^\top A x - 2x^\top A x' + x'^\top A x') +  jw^\top x'} \ \text{d}x' \vspace{1mm}\\
\displaystyle= e^{-\frac{1}{2}x^\top A x+jb}\int_{x' \in D} e^{-\frac{1}{2}x'^\top A x' + (x^\top A + jw^\top) x'} \ \text{d}x' \vspace{1mm}\\
\displaystyle= \sqrt{\frac{(2\pi)^d}{|A|}} e^{-\frac{1}{2}x^\top A x+jb} e^{\frac{1}{2}(x^\top A + jw^\top) A^{-1} (x^\top A + jw^\top)^\top} \vspace{1mm}\\
\displaystyle= \sqrt{\frac{(2\pi)^d}{|A|}} e^{-\frac{1}{2}x^\top A x+jb+\frac{1}{2}x^\top A x + jx^\top w - \frac{1}{2}w^\top A^{-1}w} \vspace{1mm}\\
\displaystyle= \sqrt{\frac{(2\pi)^d}{|A|}} e^{j(b+x^\top w) - \frac{1}{2}w^\top A^{-1}w}.
\end{array}
\end{equation}
The third equality follows from a result generalizing the Gaussian integral described at \url{https://en.wikipedia.org/wiki/Gaussian_integral#Generalizations}.
%\citep{WikiGaussian}.

From~\eqref{func},
%Let $\text{diag}(A)$ returns a diagonal matrix with diagonal elements same as $A$, we can get
$$
\begin{array}{l}
f_i(x) \\
\displaystyle = m_i + \int_{x' \in D} K_i(x-x')\ L(x') \ \text{d}x' \vspace{1.5mm}\\ 
 \displaystyle \approx m_i + \int_{x' \in D} K_i(x-x')\ \phi (x')^\top \theta \ \text{d}x' \vspace{1mm}\\
\displaystyle = m_i + \sqrt{2\alpha /m} \times \theta^\top \left(\int_{x' \in D} K_i(x-x') \cos (w_q^\top x'+b_q) \ \text{d}x'\right)^\top_{q=1, \ldots, m} \vspace{1mm}\\
 \displaystyle = m_i + \sigma_{s_i} \sqrt{\frac{2\alpha}{m(2\pi)^d|P_i^{-1}|}} \times \theta^\top
\left(\int_{x' \in D} e^{-\frac{1}{2}(x-x')^\top P_i(x-x')} \cos (w_q^\top x'+b_q) \ \text{d}x'\right)^\top_{q=1, \ldots, m} \vspace{1mm}\\
\displaystyle = m_i + \sigma_{s_i} \sqrt{\frac{2\alpha}{m(2\pi)^d|P_i^{-1}|}} \times \theta^\top \left(\frac{1}{2}\int_{x' \in D} e^{-\frac{1}{2}(x-x')^\top P_i(x-x')}
\left(  e^{j(w_q^\top x'+b_q) } + e^{-j(w_q^\top x'+b_q)}\right) \ \text{d}x'\right)^\top_{q=1, \ldots, m} \vspace{1mm}\\
\displaystyle = m_i + \frac{1}{2}\sigma_{s_i} \sqrt{\frac{2\alpha}{m(2\pi)^d|P_i^{-1}|}} \times \sqrt{\frac{(2\pi)^d}{|P_i|}} \times \theta^\top \left(e^{j(b_q+x^\top w_q) - \frac{1}{2}w_q^\top P_i^{-1}w_q} + e^{-j(b_q+x^\top w_q) - \frac{1}{2}w_q^\top P_i^{-1}w_q}\right)^\top_{q=1, \ldots, m} \vspace{1mm}\\
 \displaystyle = m_i + \sigma_{s_i} \sqrt{\frac{2\alpha}{m}} \times \theta^\top \left( \frac{1}{2} e^{- \frac{1}{2}w_q^\top P_i^{-1}w_q}\left( e^{j(b_q+x^\top w_q)} + e^{-j(b_q+x^\top w_q)}\right)\right)^\top_{q=1, \ldots, m} \vspace{1mm}\\
 \displaystyle = m_i + \sigma_{s_i}\sqrt{2\alpha /m} \times \theta^\top \left(e^{-\frac{1}{2}w_q^\top P_i^{-1} w_q} \cos (w_q^\top x+b_q)\right)^\top_{q = 1, \ldots, m}\vspace{1mm} \\
 \displaystyle = m_i + \sigma_{s_i}\sqrt{2\alpha /m} \times \theta^\top \text{diag}(e^{-\frac{1}{2}W^\top P_i^{-1} W}) \cos (W^\top x+B) \vspace{1mm}\\
 \displaystyle = m_i + \sigma_{s_i}\theta^\top \text{diag}(e^{-\frac{1}{2}W^\top P_i^{-1} W})\phi(x)
\end{array}
$$
where $w_q$ is the $q$-th column of $W$ and $b_q$ is the $q$-th component of $B$.
The first approximation is due to~\eqref{L}. The second and last equalities follow from~\eqref{phi}. The third equality is due to the definition of the convolved kernel: $K_i(x) \triangleq \sigma_{s_i}\mathcal{N}( x|\underline{0}, P^{-1}_i)$. The fourth and third last equalities follow from the fact that $\cos(x) = \frac{1}{2}(e^{jx}+e^{-jx})$ which can be derived from the Euler's formula. The fifth equality is due to \eqref{conv_exp}. 
%Let $\phi_i(x) \triangleq \sigma_{s_i}\text{diag}(e^{-\frac{1}{2}W^\top P_i^{-1} W})\phi(x)$, we can get the result of \eqref{conv}.

Then, let $\phi_i(x) \triangleq \sigma_{s_i}\ \text{diag}(e^{-\frac{1}{2}W^\top P_i^{-1} W})\ \phi(x)$. We can approximate $f_i(x)$ with $f_i(x) \approx m_i + \phi_i(x)^\top \theta$  and the approximated covariance $\sigma_{ij}(x, x') \approx \phi_i(x)^\top\phi_j(x')$ then characterizes the correlation within each function (i.e., $i = j$) and the cross-correlation between different functions (i.e., $i \neq j$).

\subsection{DERIVATION OF THE POSTERIOR DISTRIBUTION OF $\theta$} \label{a_conv2}

It follows from~\eqref{l} and~\eqref{conv} that $y_{X_i}$ is conditionally independent of $f_{X\setminus X_i}$, $W$, and $B$ given $f_{X_i}$ for $i=1,\ldots,M$ and $f_{X_1},\ldots,f_{X_M}$ are conditionally independent given $\theta$, $W$, and $B$, respectively. Then,
$$
p(y_X|\theta, W, B) = \prod_{i=1}^M \int p(y_{X_i}|f_{X_i})\ p(f_{X_i}|\theta, W, B) \ \text{d}f_{X_i}\ .
%\approx \mathcal{N}(y_X|\Phi^\top\theta + \mu_X, \Lambda - \Sigma_{XX})
$$
From Section~\ref{CMOGP}, we know that $p(y_{X_1}|f_{X_1})$ is Gaussian and $p(y_i(x)|f_i(x))$ have been approximated as Gaussian using EP for $\langle x, i\rangle \in X_B$.
As a result, $p(y_X|\theta, W, B)$ can be approximated analytically as a multivariate Gaussian distribution and the posterior distribution of $\theta$ is\vspace{-1mm}
\begin{equation} \label{p3}
p(\theta|y_X) = \mathcal{N}(\theta|A^{-1}\Phi(\Lambda - \Sigma_{XX})^{-1}(\tilde{y}_X - \mu_X), A^{-1})
\end{equation}
where $\Phi \triangleq (\phi_j(x))_{\left\langle x, j \right\rangle \in X}$ and $A = \Phi (\Lambda - \Sigma_{XX})^{-1} \Phi^\top + I$.
\section{EP APPROXIMATION FOR~\eqref{approx1}}\label{A_EP1}
Let $t_1(f^*_1) \triangleq \Phi_\text{cdf}(({f_1(x_{*}) - y_{\text{max}}})/{\sigma_{n_1}})$ and $t_j(f^*_j) \triangleq \mathbb{I}(f^*_j + c_j \geq 0)$ for $j = 2, \ldots, M$. 
Then, $p(f^*|y_X, C2, C3)$ can be approximated by a multivariate Gaussian $q(f^*)$ such that each non-Gaussian factor is replaced by a Gaussian factor, that is, $t_j(f^*_j) \approx \tilde{t}_j(f^*_j) \triangleq \mathcal{N}(f_j^* | \tilde{\mu}_j, \tilde{\tau}_j)$ for $j = 1, \ldots, M$. Let $\tilde{\mu} \triangleq (\tilde{\mu}_j)^\top_{j=1,\ldots,M}$ and $\widetilde{\Sigma}$  be a $M \times M$ diagonal matrix with $\widetilde{\Sigma}_{jj} \triangleq \tilde{\tau}_j$ for $j=1,\ldots,M$. Then,
\begin{equation} 
p(f^*|y_X, C2, C3) = \frac{1}{Z} p(f^*|y_X) \prod_{j=1}^M t_j(f^*_j) \approx q(f^*) \triangleq \mathcal{N}(f^*|\mu, \Sigma) = \frac{1}{Z} \mathcal{N}(f^* | \mu_0, \Sigma_0) \prod_{j=1}^{M}\mathcal{N}(f_j^* | \tilde{\mu}_j, \tilde{\tau}_j)
\label{approx}
\end{equation}
where $\mu \triangleq \Sigma(\widetilde{\Sigma}^{-1}\tilde{\mu}+\Sigma_0^{-1}\mu_0)$ and $\Sigma \triangleq (\widetilde{\Sigma}^{-1} + \Sigma_0^{-1})^{-1}$ can be obtained using Gaussian identities, and $\mu_0$ and $\Sigma_0$ are, respectively, the posterior mean vector and covariance matrix of the Gaussian predictive belief $p(f^*|y_X)$ computed analytically using~\eqref{var}. 
%The results of $\mu$ and $\Sigma$ can be obtained according to the Gaussian identities.
%Equation (A.7) of \citeauthor{Rasmussen2006}~\citeyear{Rasmussen2006}.
With the multiplicative form of~\eqref{approx}, EP~\citep{minka2001} can be used to 
compute the Gaussian factors 
$\tilde{t}_j(f^*_j)=\mathcal{N}(f_j^* | \tilde{\mu}_j, \tilde{\tau}_j)$
%$\mathcal{N}(f_j^* | \tilde{\mu}_j, \tilde{\tau}_j)$ 
for $j=1,\ldots,M$ in \eqref{approx}. Briefly speaking, EP will start from some initial values for $(\tilde{\mu}_j, \tilde{\tau}_j)$ and iteratively refine them, as shown in next subsection.

From~\eqref{approx}, the posterior distribution $p(f_i(x_{*})|y_X, C2)$ can be approximated by
\begin{equation}\label{fi}
\begin{array}{l}
\displaystyle p(f_i(x_{*})|y_X, C2) = \int p(f^*| y_X, C2) \ \text{d}f^*_1\ldots\text{d}f^*_{i-1}\text{d}f^*_{i+1}\ldots\text{d}f^*_{M} \\
\displaystyle \approx \int q(f^*) \  \text{d}f^*_1\ldots\text{d}f^*_{i-1}\text{d}f^*_{i+1}\ldots\text{d}f^*_{M} = \mathcal{N}(f_i(x_{*})|\mu_i, \tau_i)\vspace{-0mm}
\end{array}
\end{equation}
where %$\mu_i \triangleq [\mu]_i$ and $\tau_i \triangleq [\Sigma]_{ii}$.
$\mu_i$ is the $i$-th component of $\mu$ and $\tau_i$ is the $i$-th diagonal component of $\Sigma$.
\subsection{STEPS FOR EP APPROXIMATION}
EP is a procedure that starts from some initial values for the parameters $(\tilde{\mu}_j, \tilde{\tau}_j)$ of the Gaussian  factors $\tilde{t}_j(f^*_j)=\mathcal{N}(f_j^* | \tilde{\mu}_j, \tilde{\tau}_j)$ for $j = 1, ..., M$ and iteratively refines these quantities. At each iteration, for every Gaussian  factor $\tilde{t}_j(f^*_j)$, its contribution is removed to form the cavity distribution 
$$
q_{-j}(f^*) \propto q(f^*) / \tilde{t}_j(f_j^*) = \mathcal{N}(f^*|\mu_{-j}, \Sigma_{-j})\ .
$$
Then, the cavity distribution $q_{-j}(f_j^*)$ follows a Gaussian distribution $\mathcal{N}(f_j^*|\bar{\mu}_j, \bar{\tau}_j)$ with mean 
$\bar{\mu}_j \triangleq \bar{\tau}_j(\tau_j^{-1}\mu_j - \tilde{\tau}_j^{-1}\tilde{\mu}_j)$ and
variance $\bar{\tau}_j \triangleq (\tau_j^{-1} - \tilde{\tau}_j^{-1})^{-1}$.

%where $\mu_j \triangleq [\mu]_j$ and $\tau_j \triangleq [\Sigma]_{jj}$.
Let $\hat{q}(f_j^*) \triangleq \mathcal{N}(f_j^*|\hat{\mu}_j, \hat{\tau}_j) \propto q_{-j}(f^*_{j})t_j(f^*_j)$ denote a new Gaussian distribution whose $j$-th Gaussian  factor $\tilde{t}_j(f_j^*)$ is replaced by its corresponding real factor $t_j(f^*_j)$. It is well-known that when $q(f^*)$ is Gaussian, the distribution that minimizes KL$(\hat{q}(f^*_j)||q(f^*_j))$ is one whose first and second moments match that of $\hat{q}(f_j^*)$. Let 
\begin{equation}\label{const}
\overline{Z}_j \triangleq \log \int \mathcal{N}(f_j^*|\bar{\mu}_j, \bar{\tau}_j)\ t_j(f^*_j) \ \text{d}f^*_j\ .
\end{equation}
Then, the moments can be updated to
\begin{equation}\label{update}
\hat{\mu}_j \triangleq \bar{\mu}_j + \bar{\tau}_j \frac{\partial \overline{Z}_j}{\partial \bar{\mu}_j}\quad\text{and}\quad \hat{\tau}_j \triangleq \bar{\tau}_j - \bar{\tau}^2_j \left( \left[ \frac{\partial \overline{Z}_j}{\partial \bar{\mu}_j}\right]^2 - 2\frac{\partial \overline{Z}_j}{\partial \bar{\tau}_j} \right).
\end{equation}
The parameters of the Gaussian  factor $\tilde{t}_j(f^*_j)=\mathcal{N}(f_j^* | \tilde{\mu}_j, \tilde{\tau}_j)$ can be computed with
\begin{equation}\label{update1}
\tilde{\mu}_j = \tilde{\tau}_j(\hat{\tau}_j^{-1}\hat{\mu}_j - \bar{\tau}_j^{-1}\bar{\mu}_j)\quad\text{and}\quad\tilde{\tau}_j = (\hat{\tau}_j^{-1} - \bar{\tau}_j^{-1})^{-1}\ .
\end{equation}
By applying the results in Appendix B.$2$ in~\citep{hernandez2014} to~\eqref{const},~\eqref{update}, and~\eqref{update1}, the parameters of  $\tilde{t}_1(f^*_1)$ can be refined to
$$
\tilde{\mu}_1 = \bar{\mu}_1 + \kappa_1^{-1}\quad\text{and} \quad \tilde{\tau}_1 = \beta_1^{-1} - \bar{\tau}_1
$$ 
where
$$\alpha_1 \triangleq \frac{\bar{\mu}_1 - y_{\text{max}}}{\sqrt{\bar{\tau}_1 + \sigma_{n_1}^2}},\
\beta_1 \triangleq \frac{\phi(\alpha_1)}{\Phi_\text{cdf}(\alpha_1)} \left[\frac{\phi(\alpha_1)}{\Phi_\text{cdf}(\alpha_1)} + \alpha_1 \right]\frac{1}{\bar{\tau}_1+\sigma_{n_1}^2}, \ 
\text{and} \ \kappa_1 \triangleq \left[\frac{\phi(\alpha_1)}{\Phi_\text{cdf}(\alpha_1)} + \alpha_1 \right]\frac{1}{\sqrt{\bar{\tau}_1+\sigma_{n_1}^2}}\ .$$

Next, we will describe how to update the parameters of $\tilde{t}_j(f^*_j)$ for $j = 2, \ldots, M$.
Due to \eqref{const},
\begin{equation}
\overline{Z}_j = \log \int \mathcal{N}(f_j^*|\bar{\mu}_j, \bar{\tau}_j)\ \mathbb{I}(f^*_j + c_j \geq 0) \ \text{d}f^*_j\ = \log \Phi_\text{cdf}(\frac{c_j + \bar{\mu}_j}{\sqrt{\bar{\tau}_j}}) \ .
\end{equation}
for $j = 2, \ldots, M$.
Then, the derivative of $\overline{Z}_j$ with respect to the posterior mean $\bar{\mu}_j$ and variance $\bar{\tau}_j$ can be computed as follows:
$$
\frac{\overline{Z}_j}{\partial \bar{\mu}_j} = \frac{\phi(\alpha_j)}{\Phi_\text{cdf}(\alpha_j)}\frac{1}{\sqrt{\bar{\tau}_j}} \quad \text{and} \quad \frac{\partial \overline{Z}_j}{\partial \bar{\tau}_j} = -\frac{\phi(\alpha_j)}{\Phi_\text{cdf}(\alpha_j)} \frac{c_j+\bar{\mu}_j}{2\bar{\tau}_j\sqrt{\bar{\tau}_j}}
$$
where $ \alpha_j \triangleq (c_j + \bar{\mu}_j)/\sqrt{\bar{\tau}_j}$.

Then, the moments can be updated using \eqref{update}:
\begin{equation}\label{update2}
\hat{\mu}_j \triangleq \bar{\mu}_j + \bar{\tau}_j \frac{\partial \overline{Z}_j}{\partial \bar{\mu}_j} = \bar{\mu}_j + \sqrt{\bar{\tau}_j} \frac{\phi(\alpha_j)}{\Phi_\text{cdf}(\alpha_j)}, \quad \hat{\tau}_j  \triangleq \bar{\tau}_j - \bar{\tau}^2_j \left( \left[ \frac{\partial \overline{Z}_j}{\partial \bar{\mu}_j}\right]^2 - 2\frac{\partial \overline{Z}_j}{\partial \bar{\tau}_j} \right) = \bar{\tau}_j - \bar{\tau}^2_j \beta_j
\end{equation}
where
$$
\beta_j \triangleq \frac{\phi(\alpha_j)}{\Phi_\text{cdf}(\alpha_j)}\left[ \frac{\phi(\alpha_j)}{\Phi_\text{cdf}(\alpha_j)} + \alpha_j \right]\frac{1}{\bar{\tau}_j}\ .
$$
Then, due to \eqref{update1} and \eqref{update2}, the parameters of $\tilde{t}_j(f^*_j)$ can be refined to 
$$
\tilde{\mu}_j = \bar{\mu}_j + \kappa_j^{-1}\quad\text{and} \quad \tilde{\tau}_j = \beta_j^{-1} - \bar{\tau}_j
$$ 
where
$$\kappa_j \triangleq \left[\frac{\phi(\alpha_j)}{\Phi_\text{cdf}(\alpha_j)} + \alpha_j \right]\frac{1}{\sqrt{\bar{\tau}_j}}$$
for $j = 2, \ldots,  M$.
\section{DERIVATION OF POSTERIOR DISTRIBUTION $p(f^+| y_X, C2, C3)$}\label{A_G1}
Let $X^\dagger \triangleq X \cup \{\langle x_{*}, i\rangle\}$. Then, 
\begin{equation} \label{G1}
p(f^+| y_X, C2, C3) = p(f_i(x)|y_X,f_i^*)\ p(f_i^*|y_X, C2, C3) = \mathcal{N}(f^+|\mu^+, \Sigma^+)
\end{equation}
with posterior mean vector $\mu^+\triangleq [\mu_i; \Psi[y_X;\mu_i]]$ and covariance matrix
$$
\Sigma^+ \triangleq \begin{bmatrix}
\tau_i & \tau_i \psi \\ 
\psi \tau_i &  \sigma^2_{\langle x, i\rangle| X^\dagger} + \psi^2 \tau_i
\end{bmatrix}
$$
where $\Psi \triangleq \Sigma_{\{\langle x, i\rangle\} X^\dagger}\Sigma^{-1}_{X^\dagger X^\dagger}$ and $\psi$ is the last component of $\Psi$.
Next, we will give the derivation of $\mu^+$ and $\Sigma^+$.

Firstly, the following lemma is needed:
\begin{lemma}\label{bo.lemma1}
	Let $a$, $b$, $c$ be three random vectors with dimension $n_a$, $n_b$, $n_c$ and
	$$
	p(a|c) = \mathcal{N}(a|\mu_a, \Sigma_a)
	$$
	$$
	p(b|a, c) = \mathcal{N}(b|\mu_{b|a,c}, \Sigma_{b|a,c})
	$$
	where $\mu_{b|a,c} \triangleq M_1a+M_2c+s = [M_1, M_2][a;c]+s$. Then, the conditional joint distribution of $a$ and $b$ given $c$ is
	$$
	p(a, b|c) = \mathcal{N}([a;b]|\mu_{a,b|c}, \Sigma_{a,b|c})
	$$
	where
$$
%	\begin{equation}
%	\begin{array}{rl}
	\mu_{a,b|c} \triangleq \begin{bmatrix} \mu_a \\ [M_1, M_2][\mu_a; c]+s \end{bmatrix} \quad\text{and}\quad
	\Sigma_{a,b|c} \triangleq \begin{bmatrix}
	\Sigma_a & \Sigma_aM_1^\top\\ 
	M_1\Sigma_a &  \Sigma_{b|a,c} + M_1\Sigma_aM_1^\top
	\end{bmatrix}.
%	\end{array}
%	\end{equation}
$$
\end{lemma}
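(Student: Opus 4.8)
The plan is to exploit the linear-Gaussian structure of the two given conditionals and to treat $c$ as a fixed quantity throughout, since every distribution in sight is conditioned on it. The key observation is that $p(b\,|\,a,c)$ is Gaussian with a mean $M_1a + M_2c + s$ that is \emph{affine} in $a$, so $b$ can be represented as an affine image of $a$ plus an independent Gaussian perturbation. Concretely, I would introduce $\epsilon \sim \mathcal{N}(\underline{0}, \Sigma_{b|a,c})$ independent of $a$ given $c$, and write $b = M_1 a + M_2 c + s + \epsilon$. Then $[a;b]$ is an affine transformation of the jointly Gaussian pair $(a,\epsilon)$, hence jointly Gaussian given $c$; it then remains only to read off its mean and covariance blocks.

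First I would compute the mean: $\mathbb{E}[a\,|\,c] = \mu_a$ is immediate, while $\mathbb{E}[b\,|\,c] = M_1\,\mathbb{E}[a\,|\,c] + M_2 c + s = [M_1, M_2][\mu_a; c] + s$ by linearity of expectation and $\mathbb{E}[\epsilon\,|\,c] = \underline{0}$, matching $\mu_{a,b|c}$. Next I would compute the three covariance blocks. The top-left block is $\text{cov}[a\,|\,c] = \Sigma_a$. For the off-diagonal block, since $\epsilon$ is independent of $a$ given $c$, $\text{cov}[a, b\,|\,c] = \text{cov}[a, M_1 a\,|\,c] = \Sigma_a M_1^\top$. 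For the bottom-right block, $\text{cov}[b\,|\,c] = M_1\,\text{cov}[a\,|\,c]\,M_1^\top + \text{cov}[\epsilon\,|\,c] = M_1 \Sigma_a M_1^\top + \Sigma_{b|a,c}$. Collecting these reproduces $\Sigma_{a,b|c}$ exactly.

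I expect no serious obstacle, as this is a standard linear-Gaussian identity; the only care needed is to justify joint Gaussianity rigorously (e.g.\ by noting that any linear combination $u^\top a + w^\top b$ is, given $c$, an affine function of the jointly Gaussian $(a,\epsilon)$ and is therefore univariate Gaussian) and to keep the conditioning on $c$ explicit so that $c$ enters only the mean and never the covariance. An alternative, more calculation-heavy route would multiply the two densities $p(b\,|\,a,c)\,p(a\,|\,c)$ directly and complete the square in $[a;b]$; this also yields the stated $\mu_{a,b|c}$ and $\Sigma_{a,b|c}$ but is less transparent than the affine-map argument and would essentially recover the Gaussian identity of~\citep{schon2011} already invoked in Appendix~\ref{a_pred}.
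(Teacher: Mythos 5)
Your proof is correct, but it takes a genuinely different route from the paper. The paper's proof is the ``calculation-heavy'' alternative you mention at the end: it multiplies the densities $p(a|c)\,p(b|a,c)$, rewrites the combined exponent $E$ as a single quadratic form in $[a-\mu_a;\, b-M_1\mu_a-M_2c-s]$, and then invokes block-matrix identities (a completing-the-square formula, a block inversion identity, and a block determinant identity from the cited reference) to identify the precision matrix $R^{-1}$, invert it to get $\Sigma_{a,b|c}$, and match the normalizing constant via $|R| = |\Sigma_a||\Sigma_{b|a,c}|$. Your reparameterization argument --- setting $\epsilon \triangleq b - M_1a - M_2c - s$, observing that its conditional law given $(a,c)$ is $\mathcal{N}(\underline{0},\Sigma_{b|a,c})$ independent of $a$, and then reading off the mean and the three covariance blocks of the affine image $[a;b]$ of the jointly Gaussian pair $(a,\epsilon)$ --- reaches the same conclusion with no matrix inversions and makes the origin of each block transparent (in particular why the cross-covariance is $\Sigma_a M_1^\top$ and why $\Sigma_{b|a,c}$ appears additively in the bottom-right block). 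What the paper's density computation buys in exchange is that it directly exhibits the joint density in closed form, including the correct normalization, without needing the (standard but separate) fact that affine images of Gaussians are Gaussian; your version delegates that to the characteristic-function or linear-combination argument you sketch. Both are complete; yours is the cleaner derivation, the paper's is the more self-contained algebra.
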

\begin{proof}
	From the definition of multivariate Gaussian distribution,
	\begin{equation}\label{pab}
	p(a,b|c) = p(a|c)\ p(b|a,c) = \frac{(2\pi)^{-(n_a+n_b)/2}}{\sqrt{|\Sigma_{b|a,c}||\Sigma_a|}}e^{-\frac{1}{2}E}
	\end{equation}
	where
	$E \triangleq (b-\mu_{b|a,c})^\top \Sigma_{b|a,c}^{-1} (b-\mu_{b|a,c}) + (a-\mu_a)^\top \Sigma_a^{-1} (a-\mu_a)$.
	
	Let $f \triangleq b-M_1\mu_a-M_2 c - s$ and $e \triangleq a-\mu_a$. Then,
	\begin{equation}\label{E}
	\begin{array}{rcl}
	E &\hspace{-2.4mm} =&\hspace{-2.4mm} (b-M_1a-M_2c-s)^\top \Sigma_{b|a,c}^{-1} (b-M_1a-M_2c-s) + (a-\mu_a)^\top \Sigma_a^{-1} (a-\mu_a) \vspace{1mm}\\
	&\hspace{-2.4mm}= &\hspace{-2.4mm} (f - M_1e)^\top \Sigma_{b|a,c}^{-1} (f - M_1e) + e^\top \Sigma_a^{-1} e \vspace{1mm}\\
	&\hspace{-2.4mm}= &\hspace{-2.4mm} \begin{bmatrix} a-\mu_a\\ b-M_1\mu_a-M_2c-s \end{bmatrix}^\top R^{-1} \begin{bmatrix} a-\mu_a\\ b-M_1\mu_a-M_2c-s \end{bmatrix}
	\end{array}
	\end{equation}
	where
	$$
	R = \begin{bmatrix} M_1^\top \Sigma_{b|a,c}^{-1}M_1+\Sigma_a^{-1} & -M_1^\top \Sigma_{b|a,c}^{-1}\\ -\Sigma_{b|a,c}^{-1}M_1 &  \Sigma_{b|a,c}^{-1} \end{bmatrix}^{-1} = \begin{bmatrix} \Sigma_a & \Sigma_aM_1^\top\\ M_1\Sigma_a &  \Sigma_{b|a,c} + M_1\Sigma_aM_1^\top \end{bmatrix}.
	$$
	The last equality of \eqref{E} can be computed from equation $50$ in \citep{schon2011} and the second equality of $R$ is due to equation $9$d in \citep{schon2011}.
	Also, 
	$$
	\frac{1}{|R|} = \frac{1}{|\Sigma_a||\Sigma_{b|a,c}|}
	$$
	due to equation $51$ in \citep{schon2011}. Therefore,~\eqref{pab} can be written as
	\begin{equation}\label{pab1}
	\begin{array}{l}
	p(a,b|c) \vspace{1mm} \\
	\displaystyle = \frac{(2\pi)^{-(n_a+n_b)/2}}{\sqrt{|R|}} \exp \left(-\frac{1}{2} \begin{bmatrix} a-\mu_a\\ b-M_1\mu_a-M_2c-s \end{bmatrix}^\top R^{-1} \begin{bmatrix} a-\mu_a\\ b-M_1\mu_a-M_2c-s \end{bmatrix} \right) \vspace{1mm}\\
	\displaystyle = \mathcal{N}\left([a;b]\Bigg|\begin{bmatrix} \mu_a \\ [M_1, M_2][\mu_a; c]+s \end{bmatrix}, R\right).
	\end{array}
	\end{equation}
%	We finish the proof of Lemma \ref{bo.lemma1}.
\end{proof}

Then, in~\eqref{G1}, we know that $p(f_i(x)|y_X, f_i^*) = \mathcal{N}(f_i(x)|\mu_{\langle x, i\rangle|X^\dagger}, \sigma^2_{\langle x, i\rangle|X^\dagger})$ with $\mu_{\langle x, i\rangle|X^\dagger} \triangleq \Sigma_{\{\langle x, i\rangle\} X^\dagger}\Sigma^{-1}_{X^\dagger X^\dagger} [y_X; f^*_i]$ and $p(f_i^*|y_X, C2, C3) = \mathcal{N}(f_i^*|\mu_i, \tau_i)$~\eqref{fi}. Therefore,~\eqref{G1} can be easily obtained  by replacing $a$, $b$, and $c$ in Lemma~\ref{bo.lemma1} with $f_i^*$, $f_i(x)$, and $y_X$, respectively.

\section{DERIVATION OF POSTERIOR COVARIANCE MATRIX IN~\eqref{EPr}}\label{A_EPr}
Let 
%$a = [-1; 1]$ and 
$r \triangleq a^\top f^+$.
From~\eqref{G1} and~\eqref{approx2},
\begin{equation}\label{Z'}
\begin{array}{rl}
 Z' \hspace{-2.8mm} &\displaystyle = \int \mathcal{N}(f^+|\mu^+, \Sigma^+)\ \mathbb{I}(f_i(x) - f_i(x_{*}) \leq \delta_i c_i) \ \text{d}f^+ \vspace{1mm}\\
\hspace{-2.8mm} &\displaystyle = \int \mathcal{N}(r|\eta, v)\ \mathbb{I}(r \leq \delta_i c_i) \ \text{d}r =  \Phi_\text{cdf}\left(\frac{\delta_i c_i - \eta}{\sqrt{v}}\right).
\end{array}
\end{equation}
%where $m = a^\top \mu^+$ and $v = a^\top \Sigma^+ a$.
%Let $\gamma \triangleq \phi(\frac{\delta_i c_i - m}{\sqrt{v}})/ \Phi_\text{cdf}(\frac{\delta_i c_i - m}{\sqrt{v}})$, from equation \eqref{Z'}, 
Let $\overline{Z}'\triangleq \log Z'$.
Then, the derivative of $\overline{Z}'$ with respect to the posterior mean vector $\mu^+$ and covariance matrix $\Sigma^+$ can be computed as follows:
$$
\frac{\partial\overline{Z}'}{\partial \mu^+} = \frac{\partial \overline{Z}'}{\partial \eta} \frac{\partial \eta}{\partial \mu^+} = \frac{1}{\Phi_\text{cdf}((\delta_ic_i-\eta)/{\sqrt{v}})}\ \phi\left(\frac{\delta_ic_i-\eta}{\sqrt{v}}\right) \left(-\frac{1}{\sqrt{v}}\right) a = -\frac{\gamma}{\sqrt{v}}a \ ,
$$
$$
\frac{\partial\overline{Z}'}{\partial \Sigma^+} = \frac{\partial\overline{Z}'}{\partial v} \frac{\partial v}{\partial \Sigma^+} = \frac{1}{\Phi_\text{cdf}(({\delta_ic_i-\eta})/{\sqrt{v}})}\ \phi\left(\frac{\delta_ic_i-\eta}{\sqrt{v}}\right) \frac{\eta - \delta_ic_i}{2v\sqrt{v}} aa^\top = \frac{\gamma (\eta - \delta_ic_i)}{2v\sqrt{v}}aa^\top.
$$
Then,
$$
\mu_{f^+} = \mu^+ + \Sigma^+ \frac{\partial \overline{Z}'}{\partial \mu^+} = \mu^+ - \frac{\gamma}{\sqrt{v}}\Sigma^+a
$$
and
\begin{equation}
\begin{array}{rl}
\Sigma_{f^+} \hspace{-2.8mm} & \displaystyle = \Sigma^+ - \Sigma^+ \left(\left[\frac{\partial \overline{Z}'}{\partial \mu^+}\right]\left[\frac{\partial \overline{Z}'}{\partial \mu^+}\right]^\top - 2\frac{\partial \overline{Z}'}{\partial \Sigma^+}\right)\Sigma^+ \vspace{1mm}\\
\hspace{-2.8mm} & \displaystyle = \Sigma^+ - \Sigma^+ \left(\frac{\gamma^2}{v}aa^\top - \frac{\gamma (\eta-\delta_ic_i)}{v\sqrt{v}}aa^\top\right)\Sigma^+ \vspace{1mm}\\
\hspace{-2.8mm} & \displaystyle = \Sigma^+ - \frac{\gamma}{v}\left(\gamma-\frac{\eta-\delta_ic_i}{\sqrt{v}}\right)\Sigma^+aa^\top \Sigma^+.
\end{array}
\end{equation}
The first equality is due to~\eqref{update}.
\section{GENERALIZING TO MULTIPLE LATENT FUNCTIONS} \label{multi_L}
\subsection{CMOGP WITH MULTIPLE LATENT FUNCTIONS}
Let $\{L_q(x)\}_{q=1, ..., Q}$ denote a set of $Q$ independent latent functions. Then, CMOGP defines each $i$-th function $f_i$ as
\begin{equation}\label{func1}
f_i(x) \triangleq m_i + \sum^Q_{q=1}\int_{x'\in D}K_{iq}( x-x')\ L_q(x')\ \text{d}x'\ .
\end{equation}
Similar to CMOGP with only one latent function, the work of~\citet{Alvarez2011} has shown that if every $\{L_q( x)\}_{x\in D}$ is an independent GP for $q = 1, \ldots, Q$, then $\{f_i(x)\}_{\langle x,i\rangle \in D^+}$ is also a GP.
%%%
Specifically, let $\{L_q( x)\}_{x\in D}$ be a GP with prior covariance $\sigma^q_{xx'} \triangleq \mathcal{N}( x - x'| \underline{0}, \Gamma_q^{-1})$ and
$K_{iq}(x) \triangleq \sigma_{s_i q}\mathcal{N}( x|\underline{0}, P^{-1}_i)$. Then, 
%the covariance and cross-covariance of the unkown functions can be written as: 
\begin{equation}\label{kernel1}
\sigma_{ij}(x, x') = \sum^Q_{q=1} \sigma_{s_i q}\sigma_{s_j q}\mathcal{N}( x -  x'| \underline{0}, \Gamma_q^{-1}+P^{-1}_i+P^{-1}_j)\ .
\end{equation}
The Gaussian predictive belief in~\eqref{var} and the subsequent results in Section~\ref{MFPES} related to mixed-type CMOGP remain valid by computing its posterior covariance matrix with~\eqref{kernel1} instead of~\eqref{kernel}.
%
%Similar to that in Section \ref{CMOGP}, the fidelity of an auxiliary function $f_i$ with respect to target function $f_t$ can be computed using \eqref{kernel1} as
%\begin{equation}\label{genrho}
%\rho_i \triangleq \sigma_{it}(x_{*_i}, x_{*_t})/ (\sigma'_{s_i}\sigma'_{s_t})
%\end{equation}
%where $\sigma'_{s_i} \triangleq \left( \sum_{q=1}^Q{\sigma^2_{s_iq}}/{(2\pi|\Gamma_q^{-1}+2P^{-1}_i|)^{1/2}} \right)^{1/2}$. Note that~\eqref{genrho} reduces to~\eqref{fid} when $Q=1$.
%
\subsection{MT-RF APPROXIMATION WITH MULTIPLE LATENT FUNCTIONS}
In this subsection, we will extend the MT-RF approximation described in Section \ref{MORF} to approximate the mixed-type CMOGP model with multiple latent functions.

Similar to that in Section~\ref{MORF}, the covariance function of the GP modeling  $L_q$ can be written as
$$
\begin{array}{rcl}
\sigma^q_{xx'} &\hspace{-2.4mm}=&\hspace{-2.4mm}\displaystyle \alpha_q\int p(w_q)\ e^{-jw_q^\top(x-x')} \ \text{d}w_q \vspace{1mm}\\
&\hspace{-2.4mm}\displaystyle= &\hspace{-2.4mm} 2\alpha_q \ \mathbb{E}_{p(w_q, b_q)}[\cos (w_q^\top x+b_q)\cos (w_q^\top x'+b_q)]
\end{array}
$$
where $p(w_q) \triangleq s(w_q)/\alpha_q$, $s(w_q)$ is the Fourier dual of $\sigma^q_{xx'}$, and $b_q \sim \mathcal{U}[0, 2\pi]$.

Then, each latent function $L_q$ can be approximated by a linear model:
\begin{equation}\label{L1}
L_q(x) \approx \phi_q (x)^\top \theta_q
\end{equation}
where $\phi_q (x) \triangleq \sqrt{2\alpha_q /m}\ \cos (W_q^\top x+B_q)$ for $q = 1, \ldots, Q$, and $W_q$ and $B_q$ consist of $m$ stacked samples from $p(w_q)$ and $p(b_q)$, respectively.

Let
\begin{equation}\label{fiq}
f_{iq}(x) \triangleq \int_{x'\in D}K_{iq}( x-x')\ L_q(x')\ \text{d}x' \ .
\end{equation}
Then,
\begin{equation}\label{fi1}
f_i(x) = m_i + \sum^Q_{q=1} f_{i q}(x) = m_i + \sum^Q_{q=1} \phi_{iq}(x)^\top\theta_q = m_i + \Phi_i(x)^\top\theta
\end{equation}
where $\theta \hspace{-0.5mm} \triangleq \hspace{-0.5mm} (\theta^\top_q)^\top_{q=1, \ldots, Q}$, $\Phi_i(x) \hspace{-0.5mm} \triangleq \hspace{-0.5mm}  (\phi_{iq}(x)^\top)^\top_{q=1, \ldots, Q} $, and $\phi_{i q}(x) \hspace{-0.5mm} \triangleq \hspace{-0.5mm} s\sigma_{s_i q}\ \text{diag}(e^{-\frac{1}{2}W_q^\top P_i^{-1} W_q})\ \phi_q(x)$ can be interpreted as the input features of function $f_i(x)$ corresponding to the latent function $L_q(x)$. The first equality is due to~\eqref{func1} and~\eqref{fiq}. The second equality is due to \eqref{conv}, \eqref{L1}, and \eqref{fiq}.

Since \eqref{fi1} has exactly the same form as \eqref{conv}, all the  results in Section~\ref{MORF} will remain valid for MT-RF approximation with multiple latent functions. 
%\section{Details of the Synthetic and Benchmark Functions}
\section{ADDITIONAL EXPERIMENTAL RESULTS}\label{a_experiment}
\subsection{SYNTHETIC FUNCTIONS}\label{a_syn}
The CMOGP hyperparameters for constructing the synthetic functions are fixed as follows: $\Gamma \triangleq \text{diag}[100, 100], P_1 \triangleq \text{diag}[2000, 100], P_2 \triangleq \text{diag}[100, 2000], \sigma_{s_1} \triangleq \sigma_{s_2} \triangleq 1$, $\sigma^2_{n_1} \triangleq 0.01$, and $m_1 = 0$.

To show the accuracy of the EP approximations for the constraints in Section~\ref{approxPES}, we compare the plot of EP approximations with that of the ground truth for \eqref{PES} using our synthetic functions. Similar to that in~\cite{hernandez2014}, we can construct the ground truth of \eqref{PES} using the \emph{rejection sampling} (RS) method since our synthetic functions are sufficiently simple. Examples of \eqref{PES} produced by RS and MT-PES using $5$ and $50$ observations from evaluating the target and aux1 functions are shown in Fig.~\ref{fig:acq}. As can be seen, the acquisition function achieved by the EP approximations is quite similar to the ground truth.
\begin{figure*}
	\centering
	\begin{tabular}{cc}
		\hspace{-0mm}\includegraphics[scale=0.28]{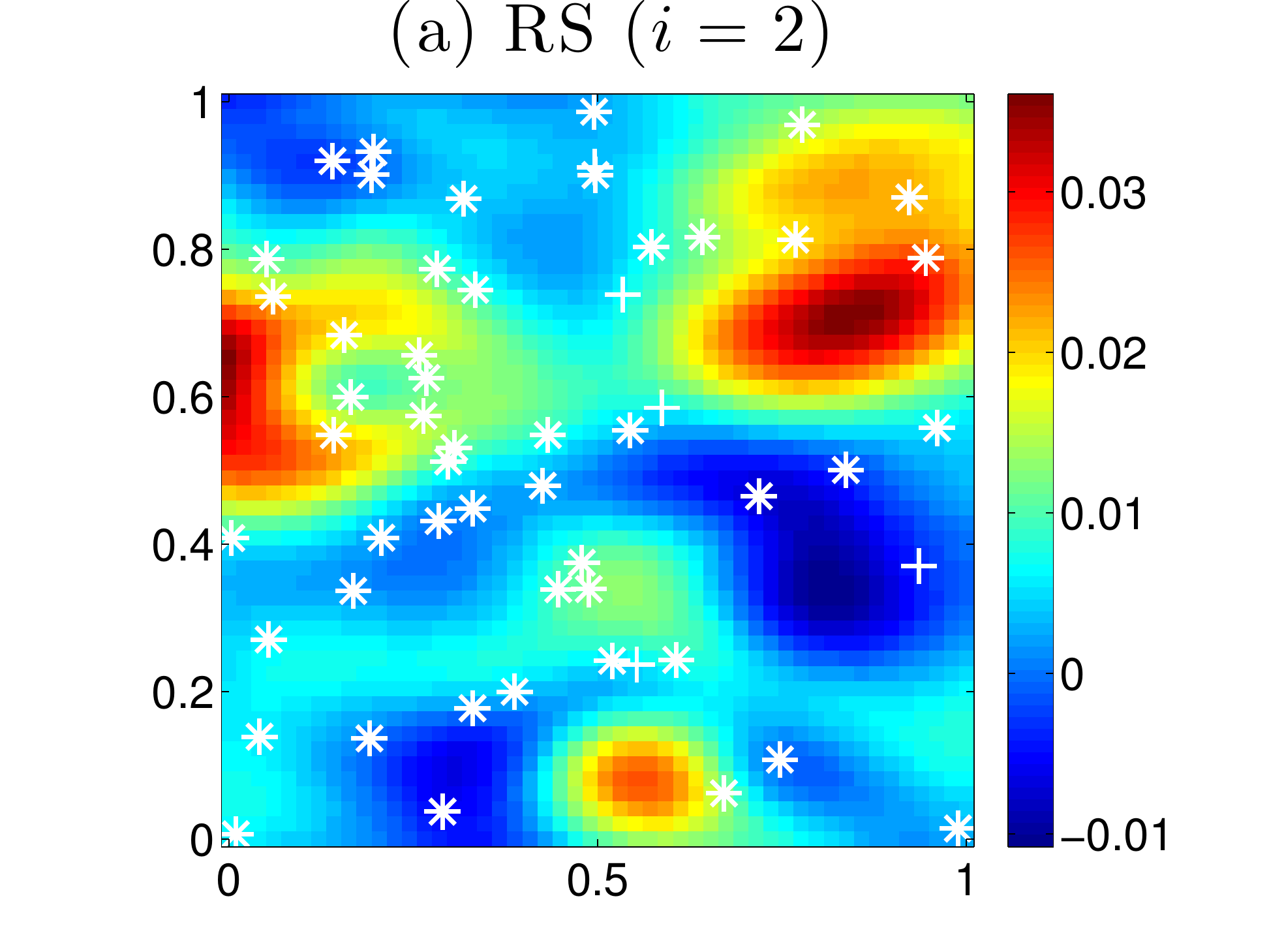} &
		\hspace{-0mm}\includegraphics[scale=0.28]{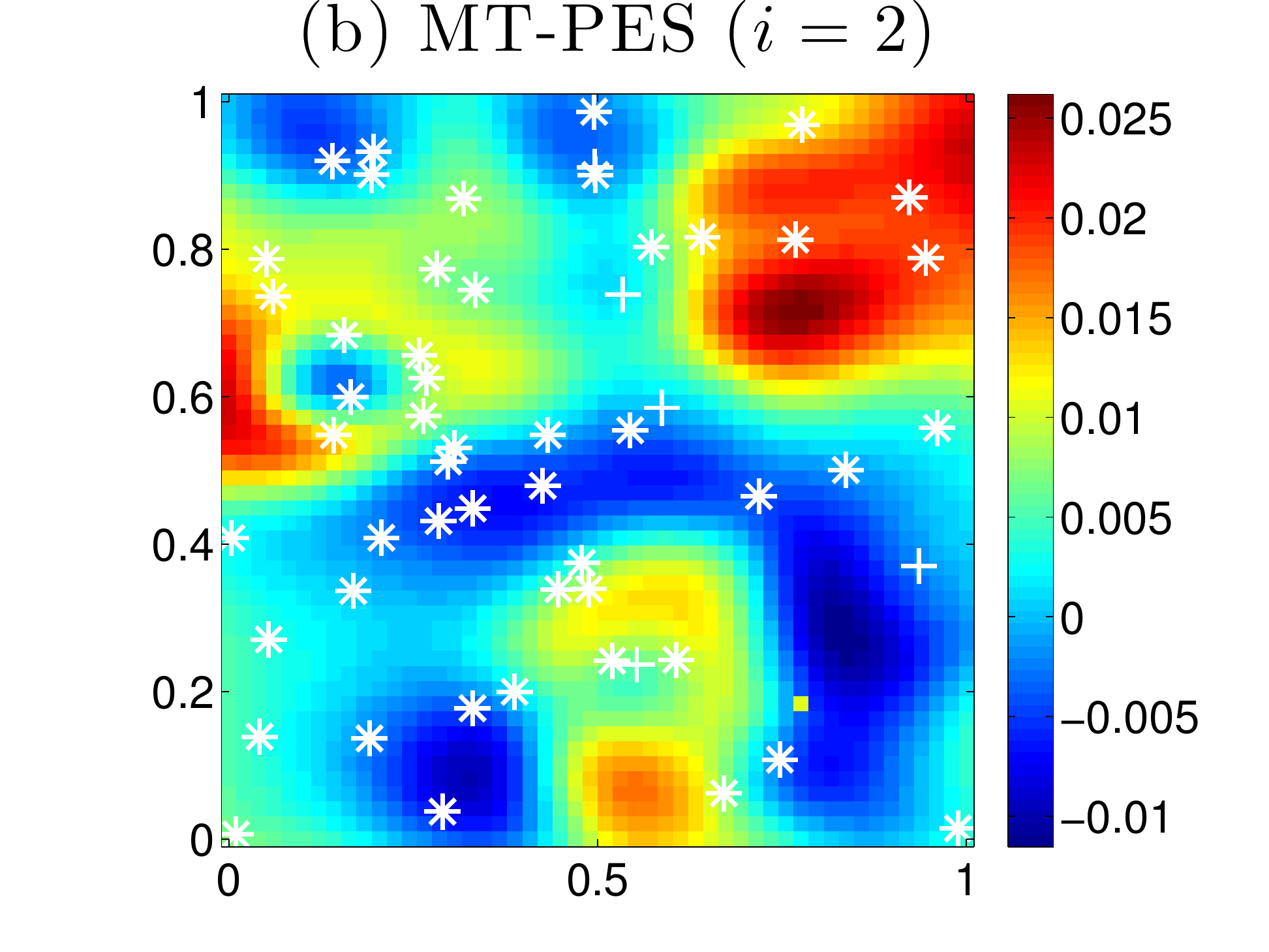}
		\vspace{-4mm} \\
	\end{tabular}
	\caption{Examples of the acquisition function \eqref{PES} with $i = 2$ obtained by (a) the \emph{rejection sampling} (RS) method and (b) our proposed MT-PES where `$+$' and `$*$' are inputs of the observations from evaluating the target and aux1 functions, respectively.}
	\label{fig:acq}
\end{figure*}

Results of MT-PES with varying costs, random features dimension, and sampling size are shown in Fig.~\ref{fig:syn_a}. It can be observed from Fig.~\ref{fig:syn_a}a that MT-PES converges faster than PES when the cost ratio of evaluating the target and auxiliary functions is larger than $25$.
Intuitively, MT-RF can achieve a more accurate approximation with a larger random feature dimension $m$ and sampling size $S$.
Figs.~\ref{fig:syn_a}b and~\ref{fig:syn_a}c show that the performance of MT-PES is robust to varying $S$ and decreases when $m$ is too small (i.e., $m=10$).
%
%\begin{figure}[h]
%	\centering
%	\includegraphics[scale=0.28]{img/syn_cost} \vspace{-4mm}
%	\caption{Graphs of $\log_{10}(\text{averaged IR})$ vs. cost incurred by tested algorithms for the synthetic target and aux1 functions with varying costs.}
%	\label{fig:syn_a}
%\end{figure}

\begin{figure*}
	\centering
	\begin{tabular}{ccc}
		\hspace{-1mm}\includegraphics[scale=0.28]{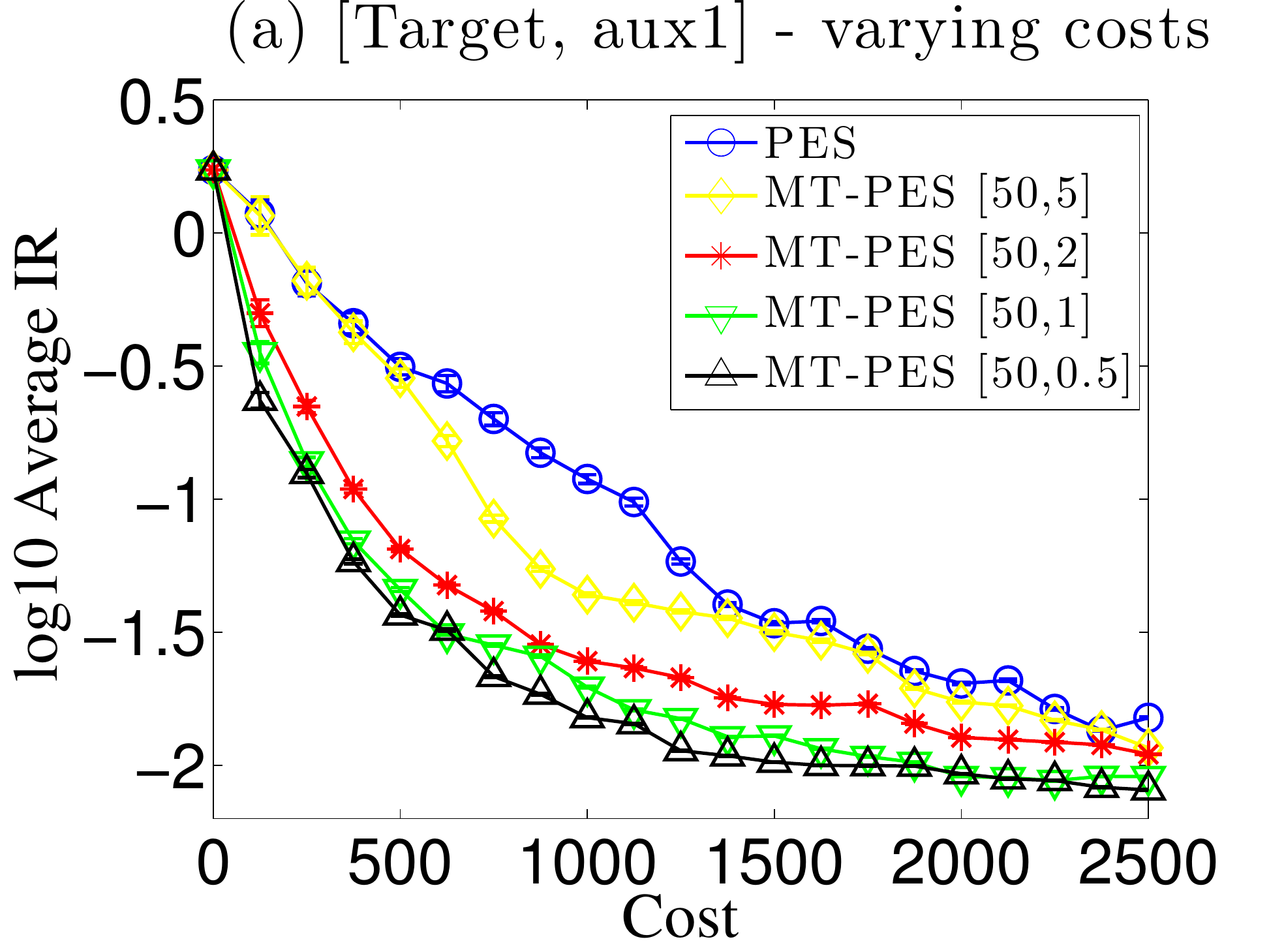} &
		\hspace{-4mm}\includegraphics[scale=0.28]{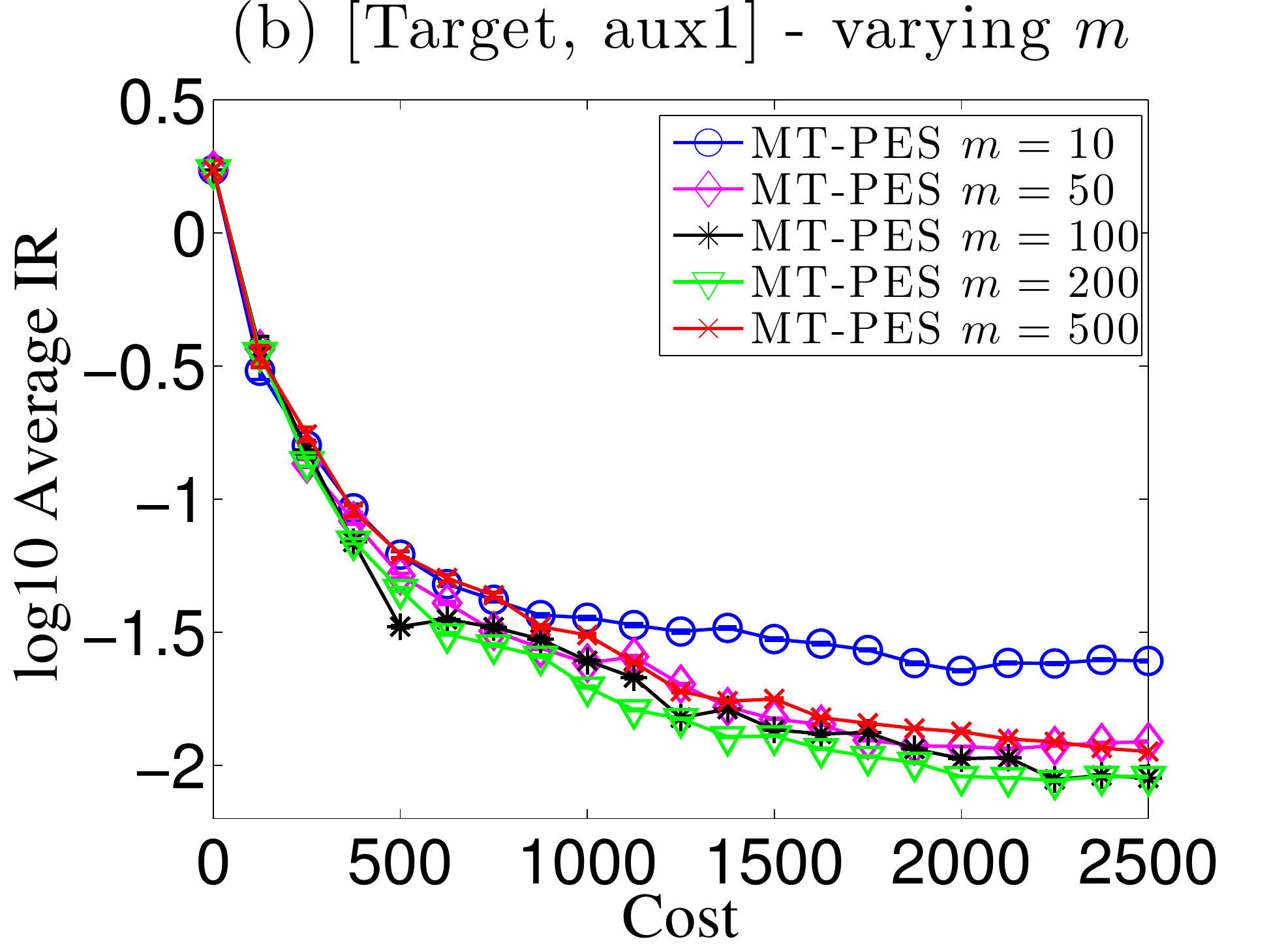} &
		\hspace{-4mm}\includegraphics[scale=0.28]{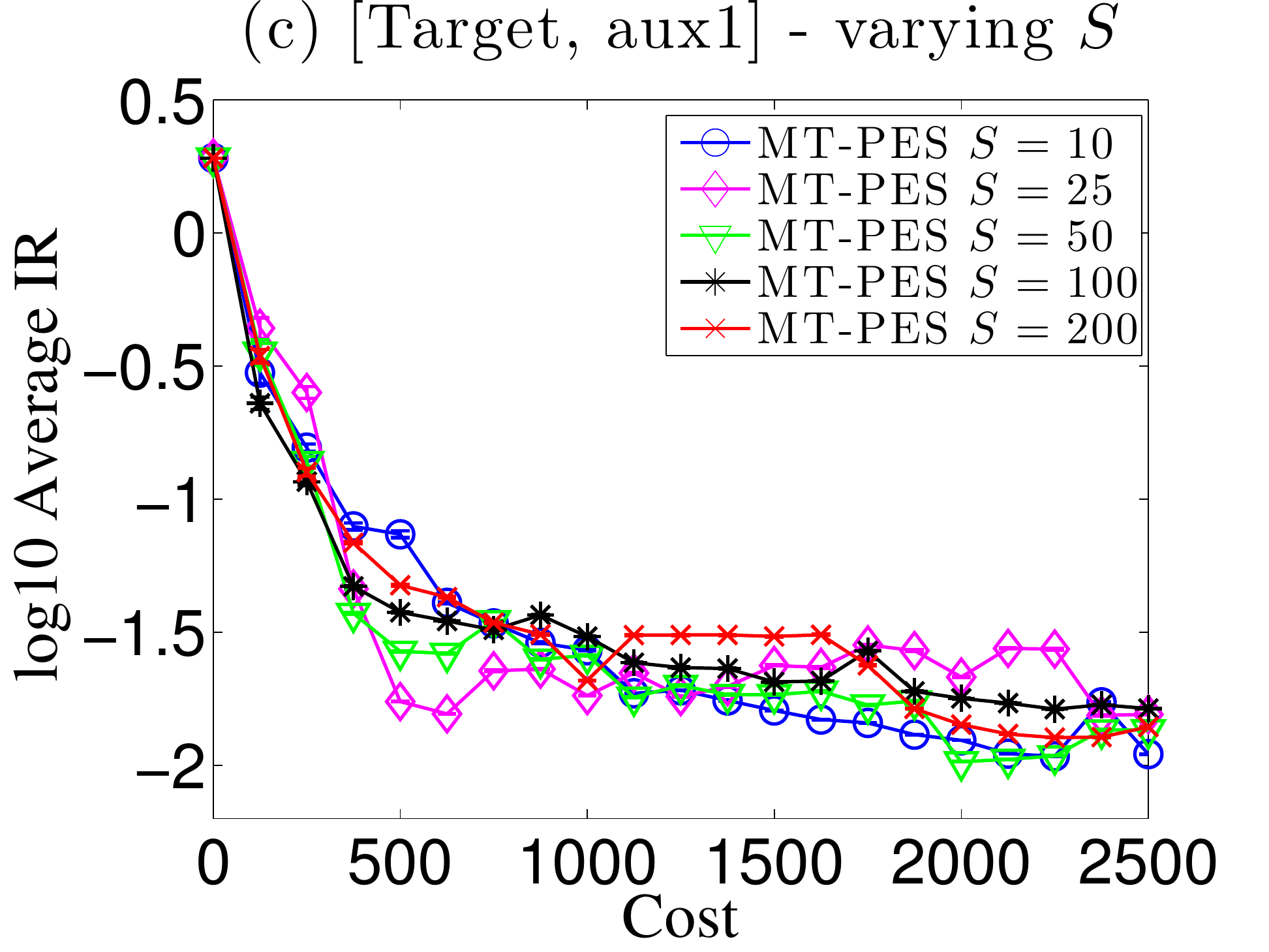}
		\vspace{-4mm} \\
	\end{tabular}
		\caption{Graphs of $\log_{10}(\text{averaged IR})$ vs. cost incurred by tested algorithms for the synthetic target and aux1 functions with (a) varying costs $\lambda_i$ for $i = 1$ and $2$, (b) varying random feature dimension $m$, and (c) varying sampling size $S$. The error bars are computed in the form of standard error.}
	\label{fig:syn_a}
\end{figure*}

\subsection{HARTMANN-6D FUNCTION}\label{a_bench}
Let $x_{(i)}$ be the $i$-th component of an input $x$. The following benchmark functions are used in our experiments: 

$D \triangleq [0, 1]^6$, $f_1(x) \triangleq \sum^4_{j=1} \beta_j \exp( \sum_{k=1}^6 A_{jk}(x_{(k)} - P_{jk})) - 0.2561$ where $A, P \in \mathbb{R}^{4 \times 6}$ are fixed matrices:

\begin{center}
$A \triangleq \left[ \begin{matrix}
10 \hspace{-1.5mm}& 3 \hspace{-1.5mm}& 17 \hspace{-1.5mm}& 3.5 & 1.7 & 8  \\ 
0.05 \hspace{-1.5mm}& 10 \hspace{-1.5mm}& 17 \hspace{-1.5mm}& 0.1 & 8 & 14  \\ 
3 \hspace{-1.5mm}& 3.5 \hspace{-1.5mm}& 1.7 \hspace{-1.5mm}& 10 & 17 & 8  \\ 
17 \hspace{-1.5mm}& 8 \hspace{-1.5mm}& 0.05 \hspace{-1.5mm}& 10 & 0.1 & 14
\end{matrix} \right]$, 
$P \triangleq 10^{-4} \times \left[ \begin{matrix}
1312 & 1696 & 5569 & 124 & 8283 & 5886  \\ 
2329 & 4135 & 8307 & 3736 & 1004 & 9991  \\ 
2348 & 1451 & 3522 & 2883 & 3047 & 6650  \\ 
4047 & 8828 & 8732 & 5743 & 1091 & 381
\end{matrix} \right]$
\end{center}

and $\beta_j$ is the $j$-{th} component of the vector $\beta \triangleq [1.0, 1.2, 3.0, 3.2].$ $y_1(x) \triangleq f_1(x) + \epsilon_1$ where $\epsilon_1 \sim \mathcal{N}(0, 10^{-3})$. $f_2(x) \triangleq f_1(x)$ and $y_2(x)$ is set to be $1$ if $f_2(x) \geq 0$, and $-1$ otherwise.

\subsection{DETAILS OF BAYESIAN OPTIMAL STOPPING IN CNN HYPERPARAMETER TUNING}\label{a_bos}

The training of a CNN under a given hyperparameter setting is an iterative process for some number $T$ of training epochs. After each training epoch $t = 1, \ldots, T$, the validation accuracy $v_t$ of the CNN trained thus far can be evaluated. As a result, a sequence of the validation accuracies (i.e., $v_1, \ldots, v_t$) can be obtained after $t$ training epochs and then used for predicting the final validation accuracy $v_T$.
%If the final v_T is predicted to be higher or lower than the threshold of 0.5 with high confidence, then the CNN training can be early-stopped. The binary decision is much cheaper since t can be much smaller than T when the CNN training is early-stopped.

Therefore, BOS models the training of the CNN as a sequential decision-making problem. After each training epoch, the BOS algorithm can choose from one of the three actions: $a_1$ = ``stop the training and conclude that $v_T \geq \delta$", $a_2$ = ``stop the training and conclude that $v_T < \delta$", and $a_3$ = ``continue to train for one more epoch" where $\delta$ is a performance threshold set as $0.5$ in our experiment. BOS maintains a posterior belief $p(v_T \geq \delta|v_1, \ldots, v_t)$ of the event “$v_T \geq \delta$” and choose the optimal action among $a_1$, $a_2$, and $a_3$ by minimizing an expected loss with respect to $p$ (see the algorithm in~\cite{muller2007} for details). If either $a_1$ or $a_2$ is taken, then the CNN training is early-stopped and the corresponding binary auxiliary output ($1$ for $a_1$ and $-1$ for $a_2$) is returned. Therefore, in principle, BOS early-stops the CNN training if it predicts that a final validation accuracy of $\delta$ can be achieved with a high probability and the binary decision is much cheaper since $t$ can be much smaller than $T$ when the CNN training is early-stopped.

\end{document}